 \pgfplotsset{
   compat=1.7,   % use \pgfplotsset{compat=newest} if fails (may change
%   % plots behavior though!
%   every axis/.append style={
%     font=\footnotesize
%   }
 }
\newtheorem{thm}{Theorem}
\newtheorem{theorem}[thm]{Theorem}
\newtheorem{lemma}[thm]{Lemma}
\newtheorem{corollary}[thm]{Corollary}
\newtheorem{proposition}[thm]{Proposition}
\newtheorem{claim}[thm]{Claim}
\newcommand*\Receive[1]{\State \textbf{receive} #1 \;}
\newcommand*\Predict[1]{\State \textbf{predict} #1 \;}
\newcommand*\Init[1]{\State \textbf{init}: #1}
\newcommand*\algoInput[1]{\textbf{Input:} #1}
\newcommand*\algoOutput[1]{\textbf{Output:} #1}
\newcommand*\GETS{\!\gets\!}
\renewcommand{\thealgorithm}{\arabic{algorithm}\&2}
\newcommand{\algorule}[1][.4pt]{\par\vskip.5\baselineskip\hrule height #1\par\vskip.5\baselineskip}
\newcommand{\algrule}{\vspace{-.2\baselineskip}\algorule\vspace{-.2\baselineskip}}
\newcommand{\algotitle}[1]{\Statex\hspace*{-1.67em}{\footnotesize{$\triangleright$
      \texttt{#1}}}\vspace{0.2\baselineskip}}
\newcommand*\nexperts{n}
\newcommand*\nswitches{k}
\newcommand*\nsegments{\nswitches + 1}
\newcommand*\pool{m}  % small pool of good experts
\newcommand*\RT[1][T]{\mathcal{R}({\iit[1:{#1}]})}
\newcommand*\regret{\sumtT\!\lt-\!\sumtT\!\ltit}
\newcommand*\iit[1][t]{i_{#1}}
\newcommand*\yt[1][t]{y^{#1}}
\newcommand*\ythat[1][t]{\hat{y}^{#1}}
\newcommand*\pred{{\texttt{pred}}}
\newcommand*\predt{\pred(\bwt,\bxt)}
\newcommand*\bpi{\bm{\pi}}
\newcommand*\piw{\pi_{w}}
\newcommand*\pis{\pi_{s}}
\newcommand*\Pij[1][ij]{P_{#1}}
\newcommand*\Pww{P_{ww}}
\newcommand*\Pss{P_{ss}}
\newcommand*\Pws{P_{ws}}
\newcommand*\Psw{P_{sw}}
\newcommand*\chit[1][t]{\chi_{#1}}
\newcommand*\bat[1][t]{\bm{a}_{#1}}
\newcommand*\bst[1][t]{\bm{s}_{#1}}
\newcommand*\ati[1][t]{a^{#1}_{i}}
\newcommand*\sti[1][t]{s^{#1}_{i}}
\newcommand*\atj[1][t]{a^{#1}_{j}}
\newcommand*\stj[1][t]{s^{#1}_{j}}
\newcommand*\bu{\bm{u}}
\newcommand*\bw{\bm{w}}
\newcommand*\bv{\bm{v}}
\newcommand*\bei[1][i]{\bm{e}_{#1}}
\newcommand*\bx{\bm{x}}
\newcommand*\bwt[1][t]{\bm{w}^{#1}}
\newcommand*\but[1][t]{\bm{u}^{#1}}
\newcommand*\bvt[1][t]{\bm{v}^{#1}}
\newcommand*\bvtdt[1][t]{\tilde{\bm{v}}^{#1}}
\newcommand*\bwtdot[1][t]{\dot{\bm{w}}^{#1}}
\newcommand*\bwqdot[1][q]{\dot{\bm{w}}^{#1}}
\newcommand*\balpha{\bm{\alpha}}
\newcommand*\bbeta{\bm{\beta}}
\newcommand*\bxt[1][t]{\bm{x}^{#1}}
\newcommand*\bgammat[1][t+1]{\bm{\gamma}^{#1}}
\newcommand*\bp{\bm{p}}
\newcommand*\ui[1][i]{u_{#1}}
\newcommand*\wi[1][i]{w_{#1}}
\newcommand*\vi[1][i]{v_{#1}}
\newcommand*\xii[1][i]{x_{#1}}
\newcommand*\wti[1][t]{w^{#1}_{i}}
\newcommand*\wtj[1][t]{w^{#1}_{j}}
\newcommand*\wtk[1][t]{w^{#1}_{k}}
\newcommand*\uti[1][t]{u^{#1}_{i}}
\newcommand*\vti[1][t]{v^{#1}_{i}}
\newcommand*\vtk[1][t]{v^{#1}_{k}}
\newcommand*\xti[1][t]{x^{#1}_{i}}
\newcommand*\wtdoti[1][t]{\dot{w}^{#1}_{i}}
\newcommand*\wtdotj[1][t]{\dot{w}^{#1}_{j}}
\newcommand*\wtdotk[1][t]{\dot{w}^{#1}_{k}}
\newcommand*\lti[1][i]{\ell^{t}_{#1}}
\newcommand*\ltit[1][t]{\ell^{t}_{i_{#1}}}
\newcommand*\alphai[1][i]{\alpha_{#1}}
\newcommand*\betai[1][i]{\beta_{#1}}
\newcommand*\betati[1][t]{\beta^{#1}_{i}}
\newcommand*\betatk[1][t]{\beta^{#1}_{k}}
\newcommand*\gammatq[1][q]{\gamma^{t+1}_{#1}}
\newcommand*\pii[1][i]{p_{#1}}
\newcommand*\ri[1][i]{r_{#1}}
\newcommand*\rj{\ri[j]}
\newcommand*\RE[2]{D(#1, #2)}
\newcommand*\simplex{\Delta_{\nexperts}}
\newcommand*\entropyBig[1]{\cH\!\!\left(\!{#1}\!\right)}
\newcommand*\cC{\mathcal{C}}
\newcommand*\cO{\mathcal{O}}
\newcommand*\cH{\mathcal{H}}
\newcommand*\cP{\mathcal{P}}
\newcommand*\cL{\mathcal{L}}
\newcommand*\cY{\mathcal{Y}}
\newcommand*\cD{\mathcal{D}}
\newcommand*\Pinc{\mathcal{P}_{inc}}
\newcommand*\Pdec{\mathcal{P}_{dec}}
\newcommand*\Sinc{\mathcal{S}_{inc}}
\newcommand*\Sdec{\mathcal{S}_{dec}}
\newcommand*\bbetat[1][t]{\bbeta^{#1}}
\newcommand*\proj[2][\bmu]{\cP({#1};{#2})}
\newcommand*\projw[1][\Cb]{\cP(\bw;{#1})}
\newcommand*\Cb[1][\bbeta]{\cC({#1})}
\newcommand*\Cbt[1][t]{\cC({\bbetat[#1]})}
\newcommand*\alphasimplex{\simplex^{\alpha}}
\newcommand*\threshold{\phi}
\newcommand*\mW{\mathcal{W}}
\newcommand*\mL{\mathcal{L}}
\newcommand*\mM{\mathcal{M}}
\newcommand*\mH{\mathcal{H}}
\newcommand*\br{\bm{r}}
\newcommand*\podsth{PoDS-$\theta$}
\newcommand*\relint[1]{\textup{ri}\ {#1}}
\newcommand*\shareth{Share-$\theta$}
\newcommand*\nature{\texttt{nature}}
\newcommand*\learner{\texttt{learner}}
\newcommand*\Learner{\texttt{Learner}}
\newcommand*\onenorm[1]{\lVert{#1}\rVert_{1}}
\newcommand*\onenormBig[1]{\left\lVert{#1}\right\rVert_{1}}
\newcommand*\lt[1][t]{\ell^{#1}}
\newcommand*\sumin[1][i]{\sum_{{#1}=1}^{n}}
\newcommand*\sumtT{\sum_{t=1}^{T}}
\DeclareMathOperator*{\argmin}{arg\,min}
\title{Improved Regret Bounds for Tracking Experts with Memory}
\author{James Robinson\qquad Mark Herbster\\
  Department of Computer Science\\
  University College London\\
  London\\
  United Kingdom\\
  \texttt{\{j.robinson | m.herbster\}@cs.ucl.ac.uk}
}
\begin{document}

\maketitle

\begin{abstract}
  We address the problem of sequential {\em prediction with expert
    advice} in a non-stationary environment with long-term memory
  guarantees in the sense of Bousquet and Warmuth~\cite{bousquet2002tracking}.
  We give a linear-time algorithm that improves on the best known
  regret bounds~\cite{koolen2012putting}. This algorithm incorporates a
  relative entropy projection step. This projection is advantageous
  over previous weight-sharing approaches in that weight updates may
  come with implicit costs as in for example portfolio optimization.
  We give an algorithm to compute this projection step in linear time,
  which may be of independent interest.
\end{abstract}

\section{Introduction}\label{sec:introduction}
We consider the classic problem of online prediction with expert
advice~\cite{littlestone1994weighted} in a non-stationary environment.
In this model \nature\ sequentially generates outcomes which
\learner\ attempts to predict. Before making each prediction,
\learner\ listens to a set of $\nexperts$ experts who each make their
own predictions. \Learner\ bases its prediction on the advice of the
experts. After the prediction is made and the true outcome is revealed
by \nature, the accuracies of \learner's prediction and the expert
predictions are measured by a loss function. \Learner\ receives 
information on all expert losses on each trial. We make no
distributional assumptions about the outcomes generated, indeed
\nature\ may be assumed to be adversarial. The goal of \learner\ is to
predict well relative to a predetermined comparison class of
predictors, in this case the set of experts themselves. Unlike the
standard regret model, where \learner's performance is compared to the
single best predictor in hindsight, our aim is for \learner\ to
predict well relative to a sequence of comparison predictors. That is,
``switches'' occur in the data sequence and different experts are
assumed to  predict well at different times.

In this work our focus is on the case when this sequence consists of a
few unique predictors relative to the number of switches.
Thus most switches return to a previously ``good'' expert, and a
\learner\ that can exploit this fact by ``remembering'' the past can adapt
more quickly than a \learner\ who has no memory and must re-learn the
experts after every switch.
The problem of switching with memory in online learning is part of a
much broader and fundamental problem in machine learning: how a system
can adapt to new information yet retain knowledge of the past.
This is an area of research in many fields, including for example,
catastrophic forgetting in artificial neural
networks~\cite{french1999catastrophic,mccloskey1989catastrophic}. 

\paragraph{Contributions.}
In this paper we present an $\cO(\nexperts)$-time per trial projection-based algorithm 
for which we prove the best known regret bound for tracking experts
with memory. Our projection-based algorithm is intimately related to
a more traditional ``weight-sharing'' algorithm, which we show is a new method for {\em Mixing Past
  Posteriors} (MPP)~\cite{bousquet2002tracking}. We show that
surprisingly this method corresponds to the algorithm with the
previous best known regret bound for this problem~\cite{koolen2012putting}.
We also give an efficient $\cO(\nexperts)$-time algorithm for
computing exact relative entropy projection onto a simplex with
non-uniform (lower) box constraints. 
Finally, we provide a guarantee which favors projection-based updates
over weight-sharing updates when updating weights may incur costs.

% \paragraph{Contributions.}
% Our main contributions are: We present a novel $\cO(\nexperts)$-time
% per-trial online algorithm for tracking experts with memory and prove
% the best known regret bound. We give an efficient
% ($\cO(\nexperts)$-time) algorithm for computing exact relative entropy
% projection onto a simplex with non-uniform (lower) box constraints. We
% provide a guarantee which favors projection-based updates over the
% more traditional ``weight-sharing'' updates when costs are incurred by
% updating weights. We show that one of our algorithms is a new efficient mixing scheme for {\em Mixing Past Posteriors}~\cite{bousquet2002tracking}, and that this corresponds
% to the current best known algorithm~\cite{koolen2012putting}.

The paper is organized as follows. We first  introduce
the model and discuss related work, giving a detailed
overview of the previous results on which we improve. In
Section~\ref{sec:pods} we give our main results, a regret bound which holds for two algorithms, and an algorithm to
compute relative entropy projection with non-uniform lower box
constraints in linear time. In Section~\ref{sec:mpp-mixing-scheme} we
derive a new ``geometric-decay'' method for MPP, and show the correspondence
to the current best known algorithm~\cite{koolen2012putting}. We
give a few concluding remarks in Section~\ref{sec:discussion}. All
proofs are contained in the appendices.

\subsection{Preliminaries}\label{sec:preliminaries}
We first introduce notation. Let $\simplex:=\{\bu\in[0,1]^{\nexperts}:\|\bu\|_{1}=1\}$ be the
$(\nexperts-1)$-dimensional probability simplex. Let $\alphasimplex :=
\{\bu\in[0,\alpha]^{\nexperts} : \|\bu\|_{1} = \alpha\}$ be a scaled
simplex.
Let $\bm{1}$ denote the vector $(1,\ldots,1)$ and $\bm{0}$ denote the
vector $(0,\ldots,0)$. Let $\bei$ denote the $i^{th}$ standard basis
vector.
%with elements equal to zero everywhere except the $i^{th}$ element equal to one. 
We define $\RE{\bu}{\bw}
:=\sumin\ui\log{\frac{\ui}{\wi}}$ to be the relative entropy between
$\bu$ and $\bw$. We denote component-wise multiplication as
$\bu\odot\bw := (\ui[1]\wi[1],\ldots,\ui[n]\wi[n])$.
For $p\in[0,1]$ we define ${\cH(p) := -p\ln{p} - (1-p)\ln{(1-p)}}$ to be
the binary entropy of $p$, using the convention that ${0\ln{0}=0}$.
We define $\relint{S}$ to be the relative interior of the set $S$.
For any positive integer $n$ we define $[n] := \{1,\ldots,n\}$.
We overload notation such that $[\texttt{pred}]$ is equal to $1$ if the predicate
\texttt{pred} is true and $0$ otherwise.
For two vectors $\balpha$ and $\bbeta$ we say $\balpha\preceq\bbeta$
iff $\alphai\leq\betai$ for all $i=1,\ldots,\nexperts$.

\section{Background}\label{sec:background}
In sequential prediction with expert advice \nature\ generates
elements from an outcome space, $\cY$ while the predictions of \learner\
and the experts are elements from a prediction space, $\cD$ (e.g., we
may have $\cY=\{0,1\}$ and $\cD=[0,1]$). Given a
non-negative loss function  $\ell:\cD\times\cY\to [0,\infty)$,
learning proceeds in trials. On each trial $t=1,\ldots,T$: $1)$
\learner\ receives the expert predictions $\bxt\in\cD^{\nexperts}$,
$2)$ \learner\ makes a prediction $\ythat\in\cD$, $3)$ \nature\
reveals the true label $\yt\in\cY$, and $4)$ \learner\ suffers loss
$\lt:=\ell(\ythat,\yt)$ and expert $i$ suffers loss
$\lti:=\ell(\xti,\yt)$ for $i=1,\ldots,\nexperts$.
Common to the algorithms we consider in this paper is a weight vector,
$\bwt\in\simplex$, where $\wti$ can be interpreted as the algorithm's
confidence in expert $i$ on trial $t$.
\Learner\ uses a prediction function
$\pred:\simplex\times\cD^{\nexperts}\to\cD$ to generate its prediction
${\ythat = \predt}$ on trial $t$. A classic example is to predict with
the weighted average of the expert predictions, that is, $\predt
=\bwt\cdot\bxt$, although for some loss functions improved bounds are
obtained with different prediction functions (see
e.g.,~\cite{vovk1990aggregating}). In this paper we assume
$(c,\eta)$-realizability of $\ell$ and
\pred~\cite{bousquet2002tracking,haussler1998sequential,vovk1998game}. That
is, there exists constants $c,\eta>0$ such that for all
$\bw\in\simplex$, $\bx\in\cD^{\nexperts}$, and $y\in\cY$, 
${\ell(\textup{\pred}(\bw,\bx),
  y) \leq -c\ln{\sumin \vi e^{-\eta\ell(\xii,y)}}}$.
This includes $\eta$-exp-concave losses when $\predt =\bwt\cdot\bxt$
and $c=\frac{1}{\eta}$. For simplicity we present regret bound
guarantees that assume $(c,\frac{1}{c})$-realizability, that is
$c\eta=1$. This includes the log loss with $c=1$, and the square loss
with $c=\frac{1}{2}$. The absolute loss is {\em not}
$(c,\eta)$-realizable.  Generalizing our bounds for general bounded,
convex losses in the sense of online convex
optimization~\cite{zinkevich2003online} and the Hedge setting~\cite{freund1997decision} is straightforward. For any comparison sequence of experts
$\iit[1],\ldots,\iit[T]\in [\nexperts]$ the regret of \learner\ with
respect to this sequence is defined as 
\begin{equation*}
  \RT=\regret\,.
\end{equation*}
We consider and derive algorithms which belong to the family of
``exponential weights'' (EW) algorithms (see
e.g.,~\cite{vovk1990aggregating,hoeven2018many,littlestone1994weighted}).
After receiving the expert losses the EW algorithm applies the following
incremental loss update to the expert weights,  
% ${\wtdoti =
%   \frac{
%     \wti e^{-\eta \lti}
%   }{
%     \sum_{j=1}^{\nexperts}\wtj e^{-\eta \lti[j]}
%   }}$.
\begin{equation}
  \label{eq:aa-update}
  \wtdoti =
  \frac{
    \wti e^{-\eta \lti}
  }{
    \sum_{j=1}^{\nexperts}\wtj e^{-\eta \lti[j]}
  }\,.
\end{equation}
\paragraph{Static setting.} In the static setting \learner\ competes against a
single expert (i.e., $\iit[1]=\ldots=\iit[T]$).
For the static setting the EW algorithm sets $\bwt[t+1]=\bwtdot$ for
the next trial, and for $(c,\frac{1}{c})$-realizable losses and
prediction functions achieves a static regret bound of $\RT\leq c\ln{\nexperts}$.

\paragraph{Switching.} In the switching (without memory) setting
\learner\ competes against a sequence of experts
${\iit[1],\ldots,\iit[T]}$ with
${k:=\sum_{t=1}^{T-1}[\iit\neq\iit[t+1]}]$ switches.
The well-known Fixed-Share algorithm~\cite{herbster1998tracking}
solves the switching problem with the update
\begin{equation}
  \label{eq:fs-update}
  \bwt[t+1] = (1-\alpha)\bwtdot + \alpha\frac{\bm{1}}{\nexperts}\,,
\end{equation}
by forcing each expert to ``share'' a fraction of its weight
\emph{uniformly} with all experts.\footnote{Technically in the original
  Fixed-Share update each expert shares weight to all \emph{other}
  experts, i.e., ${\wti[t+1]=(1-\alpha)\wtdoti+\frac{\alpha}{\nexperts-1}\sum_{j\neq
      i}\wtdotj}$. The two updates achieve essentially the same regret bound
  and are equivalent up to a scaling of $\alpha$.}
The update is parameterized by a ``switching'' parameter, 
$\alpha\in [0,1]$, and the regret with respect to the best sequence
of experts with $\nswitches$ switches is
\begin{equation}
  \label{eq:fs-regret-2}
  \RT\leq
  c\!\left(\!(\nsegments)\ln{\nexperts} +
    (T-1)\entropyBig{\frac{\nswitches}{T-1}}\!\right)\!
  \leq
  c\!\left(\!(\nsegments)\ln{\nexperts} +
    \nswitches\ln{\frac{T-1}{\nswitches}} +
    \nswitches\right).
\end{equation}
\paragraph{Switching with memory.}
Freund~\cite{Freund2000problem} gave an open problem to improve on
the regret bound~\eqref{eq:fs-regret-2} when the comparison sequence
of experts is comprised of a small pool of size
$\pool:=\vert\cup_{t=1}^{T}\{\iit\}\vert\ll\nswitches$.
Using counting arguments Freund gave an exponential-time algorithm
with the information-theoretic ideal regret bound of
${\RT\leq
  c\ln{(\binom{\nexperts}{\pool}\binom{T-1}{\nswitches}\pool(\pool-1)^{\nswitches})}}$,
which is upper-bounded by
\begin{equation}\label{eq:memory-target-bound}
  c\left(\pool\ln{\nexperts} +
    \nswitches\ln{\frac{T-1}{\nswitches}} +
    (\nswitches-\pool+1)\ln{\pool} +
    \nswitches+\pool\right)\,.
\end{equation}
The first efficient algorithm solving Freund's problem was presented
in the seminal paper~\cite{bousquet2002tracking}. This work
introduced the notion of a \emph{mixing scheme}, which is a
distribution $\bgammat$ with support $\{0,\ldots,t\}$. Given
$\bgammat$, the algorithm's update on each trial is  the
\emph{mixture} over all past weight vectors,
\begin{equation}
  \label{eq:mpp-mixture}
  \bwt[t+1]=\sum_{q=0}^{t}\gammatq\bwtdot[q]\,,
\end{equation}
where $\bwtdot[0]:=\frac{1}{\nexperts}\bm{1}$, and $\gamma^{1}_{0} := 1$. 
Intuitively, by mixing all ``past posteriors'' (MPP) the weights 
of previously well-performing experts can be prevented from
vanishing and recover quickly. An efficient mixing scheme requiring
$\cO(\nexperts)$-time per trial is the ``\emph{uniform}'' mixing
scheme given by $\gammatq[t]=1-\alpha$ and $\gammatq=\frac{\alpha}{t}$
for $0\leq q<t$. A better regret bound was proved with a ``\emph{decaying}''
mixing scheme, given by
\begin{equation}
  \label{eq:mpp-decay-scheme}
  \gammatq =
  \begin{cases}
    1-\alpha & q=t\\
    \alpha\frac{1}{(t-q)^{\gamma}}\frac{1}{Z_t} & 0\leq q<t\,,
  \end{cases}
\end{equation}
where $Z_t=\sum_{q=0}^{t-1}\frac{1}{(t-q)^{\gamma}}$ is a normalizing
factor, and $\gamma\geq 0$.
With a tuning of $\alpha=\frac{\nswitches}{T-1}$ and $\gamma=1$ this
mixing scheme achieves a regret bound of\footnote{\eqref{eq:mpp-decay-regret} is a simplified upper bound of the
  bound given in~\cite[Corollary 9]{bousquet2002tracking}, using
  $\ln{(1+x)}\leq x$.}
\begin{equation}
  \label{eq:mpp-decay-regret}
  \RT\leq c\left(
    \pool\ln{\nexperts} +
    2\nswitches\ln{\frac{T-1}{\nswitches}} +
    \nswitches\ln{(m-1)} +
    \nswitches +
    \nswitches\ln{\ln{(eT)}}
  \right)\,.
\end{equation}
It appeared that to achieve the best regret bounds, the mixing scheme 
needed to decay towards the past. 
Unfortunately, computing~\eqref{eq:mpp-decay-scheme} exactly requires
the storage of all past weights, at a cost of $\cO(\nexperts t)$-time
and space per trial. 
Observe that these schemes set $\gammatq[t]\!=\!1\!-\!\alpha$,
where typically $\alpha$ is small, since intuitively switches are
assumed to happen infrequently. All updates using such schemes are of
the form
\begin{equation}
  \label{eq:generalised-share-update}
  \bwt[t+1] = (1-\alpha)\bwtdot + \alpha\bvtdt\,,
\end{equation}
which we will call the {\em generalized share update} (see~\cite{cesa2012mirror}). Fixed-Share is a
special case when $\bvtdt=\frac{\bm{1}}{\nexperts}$ for all $t$. This
generalized share update features heavily in this paper.

For a decade it remained an open problem to give the MPP update a
Bayesian interpretation. This  was finally solved
in~\cite{koolen2012putting} with the use of \textit{partition
  specialists}. Here on each trial $t$, a specialist (first introduced
in~\cite{freund1997using}) is either \textit{awake} and predicts in
accordance with a prescribed base expert, or is \textit{asleep} and
abstains from predicting. For $\nexperts$ base experts and finite time
horizon $T$ there are $\nexperts 2^{T}$ partition specialists. For Freund's problem an assembly of $\pool$ partition specialists can
predict exactly as the comparison sequence of experts. The Bayesian
interpretation of the MPP update given in~\cite[Theorem
2]{koolen2012putting} was simple: to define a mixing scheme $\bgammat$
was to induce a prior over this set of partition specialists. 
The authors of~\cite{koolen2012putting} proposed a
simple Markov chain prior over the set of partition specialists,
giving an efficient $\cO(\nexperts)$-time per trial algorithm with the regret bound
\begin{align}
  \RT\!&\leq \!c\!\left[
         \pool\ln{\!\frac{\nexperts}{\pool}} \!+\!
         \pool\entropyBig{\frac{1}{\pool}} \!\!+\!
         (T-1)\entropyBig{\frac{\nswitches}{T-1}} \!\!+\!
         (\pool-1)(T-1)\entropyBig{\frac{\nswitches}{(\pool-1)(T-1)}}
         \!\!\right]\label{eq:pbts-regret}\\
       &\leq \!c\!\left(\!
         \pool\ln{\nexperts} \!+\!    
         2\nswitches\ln{\frac{T-1}{\nswitches}} \!+\!
         (\nswitches-\pool+1)\ln{\pool} \!+\!
         2(\nswitches+1)
         \!\right)\label{eq:pbts-regret-2}\,,
\end{align}
which is currently the best known regret bound for Freund's
problem. It is not known which MPP mixing scheme corresponds to this
Markov prior. In this work we improve on the
bound~\eqref{eq:pbts-regret} for tracking experts with memory
(Theorem~\ref{thm:pods-bound}), and also show that this Markov prior on
partition specialists corresponds to a geometrically-decaying mixing
scheme for MPP (Proposition~\ref{prop:pbts-as-mpp}).

Adaptive online learning algorithms with memory have been shown to
have better empirical performance than those without memory~\cite{gramacy2002adaptive},
and to be effective in real-world applications such as  intrusion detection
systems~\cite{nguyen2012adaptive}. While considerable research has
been done on switching with memory in online learning (see
e.g.,~\cite{bousquet2002tracking,cesa2012mirror,herbster2015predicting,
  herbster2020longterm,koolen2012putting,zheng2019equippingbandits}), 
there remain several open problems.
% in what we believe is this fundamental research problem.
Firstly, there remains a gap between the
best known regret bound for an efficient algorithm and the
information-theoretic ideal bound~\eqref{eq:memory-target-bound}. 
Present in both bounds~\eqref{eq:mpp-decay-regret}
and~\eqref{eq:pbts-regret-2} is the factor of $2$ in the second term,
which does not appear
in~\eqref{eq:memory-target-bound}. In~\cite{koolen2012putting} this
was interpreted as the cost of co-ordination between specialists, essentially one ``pays'' twice per switch as one specialist falls 
asleep and another awakens.
In this paper we  make progress in closing
this gap by avoiding such additional costs the first time each expert is learned
by the  algorithm. That is, we pay to \emph{remember} but not to
\emph{learn}.  
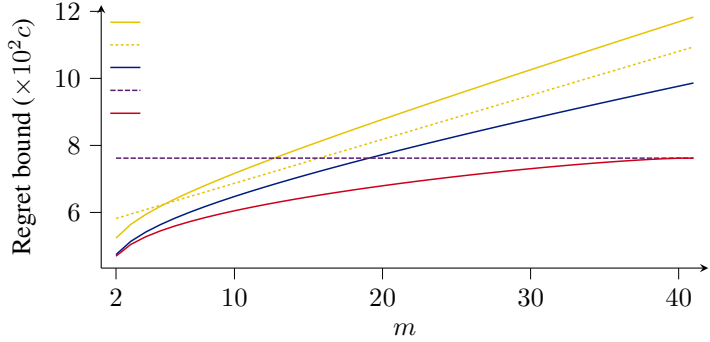
\begin{figure}[t]
	\centering
	\hspace*{-.6in}
	\pgfplotsset{
    compat=1.7,   % use \pgfplotsset{compat=newest} if fails (may change
                % plots behavior though!
  legend image code/.code={
    \draw[mark repeat=2,mark phase=2]
    plot coordinates {
      (0cm,0cm)
      (0.3cm,0cm)        %% default is (0.3cm,0cm)
      (0.4cm,0cm)         %% default is (0.6cm,0cm)
    };
  }
}
% This file was created by tikzplotlib v0.9.8.
\begin{tikzpicture} %[baseline=(current bounding box.center)]

\definecolor{colorFixedShare}{rgb}{0.349019607843137,0.117647058823529,0.443137254901961}
%{0.3,0.3,0.3}
\definecolor{colorPBTS}{rgb}{0,0.109803921568627,0.498039215686275}
\definecolor{colorMPP1}{rgb}{0.9, 0.76470588235, 0}
\definecolor{colorMPP2}{rgb}{0.23921569, 0.46666667, 0.00784314}
\definecolor{colorPODS}{rgb}{0.8, 0, 0.1}
\newcommand*\customlinewidth{0.57pt}
% \draw[gray,step=0.25] (-1.2,-.875) grid (8.1,3.6);
%\clip (-2.2,-.9) rectangle (8.1,3.6); % trim whitespace

\begin{axis}[
height=2in,
legend cell align={left},
legend style={
  nodes={scale=0.7, transform shape},
  fill opacity=0,
  draw opacity=1,
  text opacity=1,
  at={(0,1)},
  anchor=north west,
  draw=none
},
axis lines=left,
tick align=outside,
tick pos=left,
width=3.8in,
x grid style={white!69.0196078431373!black},
xlabel={\(\displaystyle m\)},
xmin=1, xmax=42,
xtick style={color=black},
xtick={2,10,20,30,40},
xticklabels={
  \(\displaystyle {2}\),
  \(\displaystyle {10}\),
  \(\displaystyle {20}\),
  \(\displaystyle {30}\),
  \(\displaystyle {40}\)
},
y grid style={white!69.0196078431373!black},
ylabel={Regret bound ($\times 10^{2} c$)},
ymin=4.3395347550838, ymax=12.186059658039,
ytick style={color=black}
]

\addplot [line width=\customlinewidth,colorMPP1] %, dash pattern=on 0.5pt off 0.5pt]
table {%
2 5.23612594604492
3 5.64460849761963
4 5.93801832199097
5 6.1843147277832
6 6.40479564666748
7 6.60894775390625
8 6.80183172225952
9 6.98646783828735
10 7.16480493545532
11 7.33817291259766
12 7.50752019882202
13 7.67354869842529
14 7.83678913116455
15 7.99765634536743
16 8.1564769744873
17 8.31351566314697
18 8.46898937225342
19 8.62307643890381
20 8.77592658996582
21 8.92766761779785
22 9.07840728759766
24 9.3772439956665
26 9.6730432510376
28 9.96627521514893
30 10.257306098938
32 10.5464296340942
34 10.8338851928711
36 11.1198682785034
38 11.4045438766479
41 11.8293991088867
};
\addlegendentry{MPP (decaying scheme)}

\addplot [line width=\customlinewidth,colorMPP1,dash pattern=on 1pt
off 1pt] %, dash pattern=on 2pt off .5pt on 1pt off .5pt]
table {%
2 5.81992816925049
41 10.9376497268677
};
\addlegendentry{MPP (uniform scheme)}

\addplot [line width=\customlinewidth,colorPBTS] %, dash pattern=on 2pt off 0.75pt on .5pt off .75pt]
table {%
2 4.74236917495728
3 5.13799381256104
4 5.41264247894287
5 5.63661241531372
6 5.8321738243103
7 6.00935888290405
8 6.17358255386353
9 6.32811260223389
10 6.47508239746094
11 6.61596441268921
12 6.75182008743286
13 6.8834433555603
14 7.01144218444824
15 7.13629531860352
16 7.25838613510132
17 7.37802696228027
18 7.49547576904297
19 7.61094856262207
20 7.72462844848633
21 7.83667182922363
22 7.94721269607544
23 8.05636882781982
24 8.1642427444458
25 8.27092456817627
26 8.37649440765381
27 8.48102378845215
28 8.58457565307617
30 8.78897190093994
32 8.99007987976074
34 9.18822765350342
36 9.38368988037109
38 9.57670021057129
40 9.76745891571045
41 9.86204814910889
};
\addlegendentry{Partition Specialists (Markov prior)}

\addplot [line width=\customlinewidth, colorFixedShare, dash pattern=on 2pt off .75pt]
table {%
2 7.62013006210327
41 7.62013006210327
};
\addlegendentry{Fixed-Share}

\addplot [line width=\customlinewidth,colorPODS]
table {%
2 4.69619512557983
3 5.04538917541504
4 5.27334451675415
5 5.45035028457642
6 5.59867000579834
7 5.72832775115967
8 5.84472942352295
9 5.95113515853882
10 6.04966735839844
11 6.14178943634033
12 6.22855186462402
13 6.31073665618896
14 6.38894033432007
15 6.46362781524658
16 6.5351676940918
17 6.60385799407959
18 6.6699390411377
19 6.73360967636108
20 6.7950325012207
21 6.85434198379517
22 6.91164970397949
23 6.96704578399658
24 7.0206036567688
25 7.07238054275513
26 7.12242031097412
27 7.17075204849243
28 7.21739149093628
29 7.26234149932861
30 7.30558919906616
31 7.34710550308228
32 7.38684177398682
33 7.4247260093689
34 7.46065282821655
35 7.49447584152222
36 7.5259838104248
37 7.55486583709717
38 7.58063411712646
39 7.60244226455688
40 7.61847734451294
41 7.62013006210327
};
\addlegendentry{\podsth\ / \shareth}

\end{axis}

\end{tikzpicture}

%%% Local Variables:
%%% mode: latex
%%% TeX-master: "PoDS_neurips-JDR-25"
%%% End:
	\caption{A comparison of the regret bounds discussed in this paper
		for ${\pool\!\in\![2,\nswitches\!+\!1]}$ with ${n\!=\!500000}$, ${\nswitches\!=\!40}$,
		and ${T\!=\!4000}$. Previous ``memory'' bounds (blue \& yellow)
		are much worse than Fixed-Share for larger values of $\pool$ while
		our bound (red) improves on Fixed-Share for all $\pool\!\in\![2,\nswitches]$.}
	\label{fig:regret-bounds}
\end{figure}

Secondly, unless $\nexperts$ is very large
the current best known bound~\eqref{eq:pbts-regret} beats Fixed-Share's
bound~\eqref{eq:fs-regret-2} only when $\pool\ll\nswitches$,
but suffers when $m$ is even a moderate fraction of
$k$.
A natural question is can we improve on Fixed-Share when we relax the
assumption that $\pool\ll\nswitches$, and only a few members of a
sequence of experts need remembering (consider for instance, $\pool >
\nswitches/2$)? 
% As an extreme example consider a sequence $\iit[1],\ldots,\iit[T]$
% with $\nswitches$ switches consisting of
% $\pool=\nswitches$ unique experts, that is, only one expert in the
% sequence is repeated and needs `remembering'. In this case the
% bound~\eqref{eq:pbts-regret} beats Fixed-Share's bound only when
% $\nexperts >\cO((\frac{T}{\nswitches})^{\nswitches})$. 
In this paper we prove a regret bound that is not only tighter
than~\eqref{eq:pbts-regret} for all $\pool$, but under mild
assumptions on $\nexperts$ improves on Fixed-Share for all $m\leq
k$. See Figure~\ref{fig:regret-bounds} where we show this behavior for
several existing regret bounds and our regret bound.
% In this extreme example where $\pool=\nswitches$ it beats
% Fixed-Share's bound when $\nexperts> \cO(\nswitches T)$.

Our regret bound will hold for two algorithms; one utilizes a weight-sharing
update in the sense of~\eqref{eq:generalised-share-update}, and the other
utilizes a projection update. Why should we consider projections? Consider for example a large
model consisting of many weights, and to update these weights costs time and/or money. Alternatively consider the
application of regret-bounded adaptive algorithms in online portfolio
selection (see e.g., \cite{singer1997switching,li2014online}). Here
each ``expert'' corresponds to a single asset, and the weight vector
$\bwt$ corresponds to a portfolio. If $\lti$ is the negative log
return of stock $i$ after trading period $t$, then the loss
function ${\lt:=-\ln{\sumin\wti e^{-\lti}}}$ is the negative log return
of the portfolio. This loss is $(1,1)$-realizable by 
definition (although there is no prediction function~\cite{adamskiy2016closer}). The algorithm's
update corresponds to actively re-balancing the portfolio
after each trading period, but the investor may incur
transaction costs proportional to the amount bought or sold (see
e.g.,~\cite{blum1999universal,li2014online}). Online portfolio
selection with transaction costs is an active area of
research~\cite{das2013online,kozat2008universal,li2014online,li2018transaction}. In
Section~\ref{sec:pods-update} we motivate the use of
projection updates over weight-sharing with a guarantee in terms
of such costs.

\subsection{Related work}\label{sec:related-work}
Switching (without memory) in online learning was first introduced
in~\cite{littlestone1994weighted}, and extended with the Fixed-Share
algorithm~\cite{herbster1998tracking}. An extensive literature has
built on these works, including but not limited
to~\cite{adamskiy2016closer,bousquet2002tracking,cesa2012mirror,daniely2015strongly,gyorgy2005tracking,gyorgy2012efficient,herbster2020longterm,herbster2019online,herbster2001tracking,koolen2012putting,koolenl2010freezingsleeping,mourtada2017efficient,sharma2020learning,zheng2019equippingbandits}. 
Relevant to this work are the results for switching with
memory~\cite{bousquet2002tracking,cesa2012mirror,herbster2020longterm,koolen2012putting,koolenl2010freezingsleeping,zheng2019equippingbandits}. The
first was the seminal work of~\cite{bousquet2002tracking}. The best known result
is given in~\cite{koolen2012putting}, which we improve on.
In~\cite{zheng2019equippingbandits} a reduction of switching
with memory to switching without memory is given, although with a slightly
worse regret bound than~\cite{bousquet2002tracking}. Related to
the experts model is the {\em bandits} setting, which was addressed in
the memory setting in~\cite{zheng2019equippingbandits}.
In~\cite{cesa2012mirror} a unified analysis of both Fixed-Share and
MPP was given in the context of online convex optimization. They observed the
generalized share update~\eqref{eq:generalised-share-update} and
slightly improved the bounds of~\cite{bousquet2002tracking}.
Adaptive
regret~\cite{adamskiy2016closer,daniely2015strongly,hazan2009efficient,littlestone1994weighted} has been used to prove regret bounds for
switching but unfortunately does not generalize to the memory setting.
This paper primarily builds on the work of~\cite{bousquet2002tracking}
with a new geometrically-decaying mixing scheme, and
on~\cite{herbster2001tracking} with a new relative entropy projection algorithm.

\section{Projection onto dynamic sets}\label{sec:pods}
In this section we give a relative entropy projection-based algorithm
for tracking experts with memory.
Given a non-empty set $\cC\subseteq\simplex$ and a point
$\bw\in\relint{\simplex}$ we define
\begin{equation*}
  \projw[\cC] := \argmin\limits_{\bu\in\cC}\RE{\bu}{\bw}\, 
\end{equation*}
to be the projection with respect to the relative entropy of $\bw$
onto $\cC$~\cite{bregman1967relaxation}.
Such projections were first introduced for switching (without
memory) in online learning in~\cite{herbster2001tracking}, in which
after every trial the weight vector $\bwtdot$ is projected onto 
$\cC=[\frac{\alpha}{\nexperts},1]^{\nexperts}\cap\simplex$, that is,
the simplex with uniform box constraints.
For prediction with expert advice this projection algorithm has the
regret bound~\eqref{eq:fs-regret-2} (see~\cite{cesa2012mirror}).  
Indeed, we will refer to 
$\bwt[t+1]=\proj[\bwtdot]{[\frac{\alpha}{\nexperts},1]^{\nexperts}\cap\simplex}$
as the ``projection analogue'' of~\eqref{eq:fs-update}.  

Given $\bbeta\in (0,1)^{\nexperts}$ such that $\|\bbeta\|_{1}\leq 1$, let  
\begin{equation*}
  \Cb:=\{\bx\in\simplex : \xii\geq\betai, i=1,\ldots,\nexperts\}
\end{equation*}
be a subset of the simplex which is convex and non-empty.
Given $\bw\in\relint{\simplex}$, intuitively $\projw[\Cb]$ is  the projection of $\bw$ onto
the simplex with (non-uniform) lower box constraints $\bbeta$. 
Relative entropy projection updates for tracking experts with memory
were first suggested in~\cite[Section 5.2]{bousquet2002tracking}. The
authors observed that for any MPP mixing scheme $\bgammat$, the update~\eqref{eq:mpp-mixture} can be replaced with
\begin{equation}
  \label{eq:mpp-projection}
  \bwt[t+1]=\proj[\bwtdot]{\{\bw\in\simplex: \bw\succeq\gammatq\bwqdot,q=0,\ldots,t\}}\,,
\end{equation}
and achieve the same regret bound.
We build on this concept in this paper. 
Observe that for any choice of $\bgammat$ the set $\{\bw\in\simplex:
\bw\succeq\gammatq\bwqdot,q=0,\ldots,t\}$ corresponds to the set $\Cb$
where
\begin{equation}
  \label{eq:beta-def}
  \betai = \max\limits_{0\leq q\leq
    t}\gammatq\wtdoti[q]\qquad i=1,\ldots,\nexperts\,.
\end{equation}
In this work we give an algorithm to compute $\projw$ exactly
for any $\Cb$ in $\cO(\nexperts)$ time. With this algorithm
and the mapping~\eqref{eq:beta-def}, one immediately obtains the
projection analogue of MPP for any mixing scheme $\bgammat$ at
essentially no additional computational cost.
We point out however that for arbitrary mixing schemes computing
$\bbeta$ from~\eqref{eq:beta-def} takes $\cO(\nexperts t)$-time on
trial $t$, improving only when some structure of the scheme can be
exploited.
We therefore propose the following method for tracking
experts with memory {\em efficiently} using projection onto dynamic
sets (``PoDS''). 

Just as~\eqref{eq:generalised-share-update} generalizes the Fixed-Share update~\eqref{eq:fs-update}, we
propose PoDS as the analogous generalization of the update $\bwt[t+1] \!=\!
\proj[\bwtdot]{\Cb[\alpha\frac{\bm{1}}{\nexperts}]}$ (the projection analogue of
Fixed-Share). 
PoDS maintains a vector $\bbetat\in\alphasimplex$, and on each trial
updates the weights by setting ${\bwt[t+1] \!=\! \proj[\bwtdot]{\Cbt}}$.
Intuitively PoDS is the projection analogue
of~\eqref{eq:generalised-share-update} with $\bbetat$ corresponding
simply to $\alpha\bvtdt$. In some cases
$\bbetat\!=\!\alpha\bvtdt$ for all $t$ (e.g., for Fixed-Share), but
in general equality may not hold since $\bbetat$
and $\bvtdt$ can be functions of past weights, which may differ for
weight-sharing and projection algorithms.
Recall that~\eqref{eq:generalised-share-update}
describes all MPP mixing schemes that set $\gammatq[t]\!=\!1\!-\!\alpha$. PoDS
implicitly captures all such mixing schemes. This simple
formulation of PoDS allows us to define new updates, which will
correspond to new mixing schemes. In Section~\ref{sec:pods-update} we
give a simple update and prove the best known regret bound.

\subsection{Computing
  \texorpdfstring{$\projw$}{P(w;C(B))}}\label{sec:computing-projection}
Before we consider PoDS further, we first discuss
the computation of $\projw$.
In~\cite{herbster2001tracking} the authors showed that computing
relative entropy projection onto the simplex with {\em uniform} box
constraints is non-trivial, but gave an algorithm to compute it in
$\cO(\nexperts)$ time. We give a generalization of their algorithm
to compute $\projw$ exactly for any non-empty set $\Cb$ in $\cO(\nexperts)$ time.
% \footnote{Our algorithm
%   can be adapted to allow $\bbeta\in[0,1]^{\nexperts}$. Intuitively
%   having $\betai=0$ for some component corresponds to removing a
%   constraint, but for simplicity we assume box constraints exist for all
%   components.}
As far as we are aware our method to compute exact relative entropy
projection onto the simplex with non-uniform (lower) box constraints
in linear time is the first, and may be of independent interest (see e.g.,~\cite{krichene2015efficient}).

We first give an intuition into the form of $\projw[\Cb]$, and then
describe how Algorithm~\ref{alg:linear-time-proj} computes this projection efficiently. 
Firstly consider the case that $\bw\in\Cb$, then trivially $\projw[\Cb]=\bw$,
due to the non-negativity of $\RE{\bu}{\bw}$
and the fact that $\RE{\bu}{\bw}=0$ iff $\bu=\bw$ (see
e.g.,~\cite{bregman1967relaxation}). For the case that $\bw\notin\Cb$,
this implies that the set $\{i\in [n] : \wi<\betai\}$ is
non-empty. For each index $i$ in this set the projection of $\bw$ onto
$\Cb$ must set the component $\wi$ to its corresponding constraint
value $\betai$. The remaining components are then normalized, such that
$\sumin \wi=1$. However, doing so may cause one (or more) of these
components $\wi[j]$ to drop below its constraint $\betai[j]$. The projection algorithm therefore finds the set of
components $\Psi$ of least cardinality to set to their constraint
values such that when the remaining components are normalized, no component lies below its constraint.

Consider the following inefficient approach to finding $\Psi$. Given
$\bw$ and $\Cb$, let $\br=\bw\odot\frac{1}{\bbeta}$ be a 
``ratio vector''.  Then sort $\br$ 
in ascending order, and sort $\bw$ and $\bbeta$ according
to the ordering of $\br$. 
If $\ri[1] \geq 1$ then $\Psi=\emptyset$ and we are done
($\Rightarrow\bw\in\Cb$). 
Otherwise for each $k=1,\ldots,\nexperts$: $1)$ let the candidate set
$\Psi^{'} = [k]$, $2)$ let $\bw'=\bw$ except for each $i\in\Psi^{'}$
set $\wi'=\betai$, $3)$ re-normalize the remaining components of
$\bw'$, and $4)$ let $\br'=\bw'\odot\frac{1}{\bbeta}$. The set $\Psi$
is then the candidate set $\Psi^{'}$ of least cardinality such that
$\br' \succeq \bm{1}$. This approach requires sorting $\br$
and therefore even an efficient implementation
takes $\cO(\nexperts \log{\nexperts})$
time. Algorithm~\ref{alg:linear-time-proj} finds $\Psi$ without having
to sort $\br$. It instead specifies $\Psi$ uniquely with a threshold,
$\threshold$, such that $\Psi=\{i : \ri < \threshold\}$.
Algorithm~\ref{alg:linear-time-proj} finds $\threshold$
through repeatedly bisecting the set $\mW=[\nexperts]$ by finding the
median of the set $\{\ri : i \in \mW\}$ (which can be done in
$\cO(\vert\mW\vert)$ time~\cite{blum1973time}), and efficiently
testing this value as the candidate threshold on each iteration. The smallest
valid threshold then specifies the set $\Psi$.
The following theorem states the time complexity of the algorithm and the form of the projection, which
we will use in proving our regret bound (the proof is in
Appendix~\ref{sec:proj-linear-time-proof}, where we give a more
detailed description of the algorithm).
%%%%%%%%%%%%% 
%% Theorem %%
%%%%%%%%%%%%% 
\begin{theorem}\label{thm:projection-form}
  For any $\bbeta\in (0,1)^{\nexperts}$ such that
  $\onenorm{\bbeta}\leq 1$, and for any $\bw\in\relint{\simplex}$, let
  ${\bp=\projw[\Cb]}$, where $\Cb=\{\bx\in\simplex : \xii\geq\betai, i=1,\ldots,\nexperts\}$. Then
  $\bp$ is such that for all $i=1,\ldots,n$, 
  \begin{equation}
    \label{eq:proj-max-form}
    \pii = \max\left\{
      \betai;
      \frac{
        1-\sum_{j\in\Psi}\betai[j]
      }{
        1 - \sum_{j\in\Psi}\wi[j]
      }\wi
    \right\}\,,
  \end{equation}
  where $\Psi := \{i \in [n] : \pii = \betai\}$. Furthermore,
  Algorithm~\ref{alg:linear-time-proj} computes $\bp$ in
  $\cO(\nexperts)$ time.
\end{theorem}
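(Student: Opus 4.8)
The plan is to prove the two assertions separately: first the closed form~\eqref{eq:proj-max-form} by convex duality, and then the $\cO(\nexperts)$ running time by analyzing the bisection strategy of Algorithm~\ref{alg:linear-time-proj}. Since $\Cb$ is convex, closed and non-empty, and $\bw\in\relint{\simplex}$ forces $\RE{\cdot}{\bw}$ to be finite and strictly convex on $\Cb$ (every point of $\Cb$ has all coordinates at least $\min_i\betai>0$), the minimizer $\bp$ exists and is unique. To identify it I would form the Lagrangian of $\min_{\bu}\RE{\bu}{\bw}$ subject to $\sum_i\ui=1$ and $\ui\ge\betai$, with a scalar multiplier $\lambda$ for the equality and multipliers $\mu_i\ge0$ for the box constraints. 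Stationarity gives $\ln(\ui/\wi)+1+\lambda-\mu_i=0$, hence $\ui=\wi e^{\mu_i}/Z$ with $Z:=e^{1+\lambda}$. Complementary slackness $\mu_i(\ui-\betai)=0$ splits the coordinates: on $\Psi$ the constraint is active and $\pii=\betai$, while off $\Psi$ we have $\mu_i=0$ and $\pii=\wi/Z$. Enforcing $\sum_i\pii=1$ and using $\bw\in\simplex$ yields $1/Z=(1-\sum_{j\in\Psi}\betai[j])/(1-\sum_{j\in\Psi}\wi[j])$, which is exactly the coefficient in~\eqref{eq:proj-max-form}. To recover the $\max$ form I would use the signs of the multipliers: for $i\notin\Psi$, $\pii=\wi/Z>\betai$ by definition of $\Psi$, so the max selects $\wi/Z$; for $i\in\Psi$, $\betai=\wi e^{\mu_i}/Z\ge\wi/Z$ since $\mu_i\ge0$, so the max selects $\betai$. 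Together these give~\eqref{eq:proj-max-form}.

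The linear-time claim rests on showing that $\Psi$ is determined by a single threshold on the ratios $\ri=\wi/\betai$. From the case analysis above, $i\in\Psi$ iff $\betai\ge\wi/Z$ iff $\ri\le Z$, and $i\notin\Psi$ iff $\ri>Z$; hence $\Psi$ consists precisely of the coordinates with the smallest ratios, and taking $\threshold:=\min\{\ri:\ri>Z\}$ gives $\Psi=\{i:\ri<\threshold\}$, matching the algorithm's description. Finding the projection therefore reduces to locating the correct threshold among the $\nexperts$ ratio values: a candidate $\threshold$ is valid exactly when the induced set $\{i:\ri<\threshold\}$, its induced normalizing coefficient, and the resulting vector satisfy $\bp\succeq\bbeta$, and the true $\Psi$ is the valid candidate of least cardinality.

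For the running time I would analyze the bisection directly. Computing $\br$ costs $\cO(\nexperts)$. The algorithm maintains a working set $\mW$ (initially $[\nexperts]$) together with running sums $\sum_{j}\betai[j]$ and $\sum_{j}\wi[j]$ over the coordinates already committed to lie below the sought threshold. Each round selects the median of $\{\ri:i\in\mW\}$ in $\cO(|\mW|)$ time via the linear-time selection of~\cite{blum1973time}, partitions $\mW$ about it, tests the median as a candidate threshold using the maintained sums (updating them in $\cO(|\mW|)$), and discards the half of $\mW$ that cannot contain the threshold. Because $|\mW|$ halves every round, the total work is $\sum_{k\ge0}\cO(\nexperts/2^{k})=\cO(\nexperts)$, and once $\threshold$ is fixed the output~\eqref{eq:proj-max-form} is produced with one final $\cO(\nexperts)$ pass.

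I expect the duality computation to be routine; the delicate part is the linear-time analysis. The main obstacle will be arguing that the validity test for a candidate threshold, together with the associated update of the running sums, can be charged only to the surviving half of $\mW$, so that the per-round cost telescopes to $\cO(\nexperts)$ rather than the $\cO(\nexperts\log\nexperts)$ incurred by sorting, and verifying that the bisection's discard rule never eliminates the true threshold.
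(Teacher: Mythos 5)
Your proof of the first statement is correct and is essentially the paper's own: the same Lagrangian, the same use of complementary slackness to split the coordinates into $\Psi$ and its complement, the same normalization step identifying the coefficient, and the same use of the sign of the multipliers to justify the max form. You even get one thing more cleanly than the paper: the fact that $\Psi$ consists precisely of the coordinates with smallest ratios $\ri$ follows in your write-up directly from dual feasibility ($i\in\Psi$ iff $\ri\le Z$), whereas the paper proves this ordering property as a separate step (Claim~\ref{claim:ordering}) by a contradiction argument from the max form.

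The second statement is where you have a genuine gap. Your reduction --- ``a candidate threshold is valid exactly when the induced vector satisfies $\bp\succeq\bbeta$, and the true $\Psi$ is the valid candidate of least cardinality'' --- is asserted, not proved, and it is exactly the nontrivial content; you yourself flag ``verifying that the bisection's discard rule never eliminates the true threshold'' as the main obstacle and then leave it open. The subtlety is that your validity test checks only primal feasibility, while your KKT characterization of the true $\Psi$ requires primal feasibility off $\Psi$ \emph{and} dual feasibility ($\mu_i\ge 0$) on $\Psi$; one must rule out that a primal-feasible prefix of strictly smaller cardinality corresponds to a non-optimal point, or that the test $\threshold\lambda<1$ sends the search into the wrong half. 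The paper closes precisely this hole with Claim~\ref{claim:ksetonly}: for nested prefix sets $\Psi'\subseteq\Psi''$ the induced vectors satisfy $\RE{\bu'}{\bw}\le\RE{\bu''}{\bw}$, because $\bu'$ is the minimizer of the equality-constrained problem whose constraints are indexed by $\Psi'$ and $\bu''$ is feasible for that problem; combined with uniqueness of the projection, this implies the least-cardinality primal-feasible prefix \emph{is} the projection, which is what licenses the discard rule in Algorithm~\ref{alg:linear-time-proj}. You could alternatively close the gap inside your own framework (e.g., show that if a minimal primal-feasible prefix violated dual feasibility, deleting its largest-ratio element would preserve primal feasibility, contradicting minimality, so the minimal primal-feasible prefix is a KKT point and hence the unique minimizer), but some such lemma is indispensable. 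By contrast, the part you worried about most --- charging the per-round cost --- is not an issue: each round costs $\cO(\vert\mW\vert)$ using the maintained prefix sums, and $\vert\mW\vert$ halves every round, so the total is a geometric series summing to $\cO(\nexperts)$, exactly as in the paper.
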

The following corollary will be used in the proof of our regret bound.
%%%%%%%%%%%%%%% 
%% Corollary %%
%%%%%%%%%%%%%%% 
\begin{corollary}\label{cor:no-switch}
  Let $0<\alpha<1$. Then for any $\bu\in\simplex$,
  $\bw\in\relint{\simplex}$, and $\bbeta\in\relint{\alphasimplex}$,
  let $\bp=\proj[\bw]{\Cb}$. Then,
  \begin{equation}\label{eq:pods-no-switch}
    \RE{\bu}{\bw} - \RE{\bu}{\bp} \geq
    \ln{(1-\alpha)}\,. 
  \end{equation}  
\end{corollary}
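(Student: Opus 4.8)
The plan is to reduce the whole inequality to a single lower bound on the rescaling constant produced by the projection. First I would expand the left-hand side and cancel the common self-entropy terms: since $\bu$, $\bw$, and $\bp$ all lie in $\simplex$,
\begin{equation*}
  \RE{\bu}{\bw} - \RE{\bu}{\bp}
  = \sumin \ui\log\frac{\ui}{\wi} - \sumin \ui\log\frac{\ui}{\pii}
  = \sumin \ui\log\frac{\pii}{\wi}\,,
\end{equation*}
so it suffices to show $\sumin \ui\log\frac{\pii}{\wi}\geq\ln(1-\alpha)$.

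Next I would substitute the explicit form of $\bp$ from Theorem~\ref{thm:projection-form}. Writing $\Psi=\{i:\pii=\betai\}$ and letting $\kappa=\frac{1-\sum_{j\in\Psi}\betai[j]}{1-\sum_{j\in\Psi}\wi[j]}$ denote the common rescaling factor, the projection satisfies $\pii=\kappa\wi$ for $i\notin\Psi$ and $\pii=\betai$ for $i\in\Psi$. The key observation is that on $\Psi$ the maximum in~\eqref{eq:proj-max-form} selects $\betai$, which forces $\betai\geq\kappa\wi$; hence $\log\frac{\pii}{\wi}\geq\log\kappa$ holds on $\Psi$ as well as (with equality) off $\Psi$. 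Since every $\ui\geq 0$ and $\sumin\ui=1$, this yields
\begin{equation*}
  \sumin \ui\log\frac{\pii}{\wi}\geq\log\kappa\sumin\ui=\log\kappa\,.
\end{equation*}

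It then remains to prove $\kappa\geq 1-\alpha$. Here I would bound numerator and denominator separately: the numerator satisfies $1-\sum_{j\in\Psi}\betai[j]\geq 1-\onenorm{\bbeta}=1-\alpha$ because $\bbeta\in\relint{\alphasimplex}$, while the denominator satisfies $0<1-\sum_{j\in\Psi}\wi[j]\leq 1$ since the $\wi$ are positive and, as $\Psi$ is a proper subset, sum to strictly less than one over $\Psi$. Dividing gives $\kappa\geq 1-\alpha$, so $\log\kappa\geq\log(1-\alpha)=\ln(1-\alpha)$, which combined with the previous display completes the proof.

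The main obstacle I anticipate is the bookkeeping that makes $\kappa$ well-defined and strictly positive, specifically arguing that $\Psi\neq[n]$ so the complement is nonempty and the denominator does not vanish. This follows because $\bp\in\simplex$ sums to one whereas $\sumin\betai=\alpha<1$, so not all components can be pinned to their constraints. Once this is secured, the crucial inequality $\betai\geq\kappa\wi$ for $i\in\Psi$ used in the second step is immediate from the max-form~\eqref{eq:proj-max-form}, and the rest is routine.
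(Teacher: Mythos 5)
Your proposal is correct and takes essentially the same route as the paper's proof: expand the difference into $\sumin \ui\ln{\frac{\pii}{\wi}}$, use the max-form~\eqref{eq:proj-max-form} to lower-bound every ratio $\frac{\pii}{\wi}$ by the rescaling constant $\lambda = \frac{1-\sum_{j\in\Psi}\betai[j]}{1-\sum_{j\in\Psi}\wi[j]}$, and then bound $\lambda\geq 1-\alpha$ via $\sum_{j\in\Psi}\betai[j]\leq\alpha$ and $\sum_{j\in\Psi}\wi[j]\geq 0$. The only cosmetic difference is that you split into the cases $i\in\Psi$ and $i\notin\Psi$ and explicitly verify $\Psi\neq[n]$, whereas the paper compresses both cases into the single observation $\max\{a,b\}\geq b$ and leaves the well-definedness of $\lambda$ implicit.
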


\subsection{A simple update rule for PoDS}\label{sec:pods-update}
We now suggest a simple update rule for $\bbetat$ in PoDS for tracking
experts with memory. The bound for this algorithm is given in Theorem~\ref{thm:pods-bound}. 
We first set $\bbetat[1]=\alpha\frac{\bm{1}}{\nexperts}$ to be uniform, and with a
parameter $0\leq\theta\leq 1$ update $\bbetat$ on subsequent trials by setting
\begin{equation}
  \label{eq:update-beta}
  \bbetat[t+1] = (1-\theta)\bbetat + \theta\alpha\bwtdot\,.
\end{equation}
We refer to PoDS with this
update as \podsth. Intuitively the constraint
vector $\bbetat$ is updated in~\eqref{eq:update-beta} by mixing in a
small amount of the current  weight vector, $\bwtdot$, scaled
such that $\onenorm{\bbetat[t+1]}=\alpha$. If expert $i$ predicted well
in the past, then its constraint $\betati$ will be relatively large,
preventing the weight from vanishing even if that expert suffers large
losses locally. Using Algorithm~\ref{alg:linear-time-proj} in its
projection step, \podsth\ has  $\cO(\nexperts)$ per-trial time complexity. 
%%%%%%%%%%%%%%%% 
%% Algorithms %%
%%%%%%%%%%%%%%%%
\begin{figure*}[t]
  \begin{minipage}[t]{0.48\linewidth}
    \begin{algorithm}[H]
      \floatname{algorithm}{Algorithms}
      \caption{\podsth\ / \shareth\ \phantom{$($}}
      \label{alg:pods}
      \algoInput{$\nexperts>0$, $\eta=\frac{1}{c}>0$, $\alpha\in
        [0,1]$, $\theta\in [0,1]$}
      \begin{algorithmic}[1]
        \algrule \vspace{-.1\baselineskip}
        \algotitle{PoDS-$\theta$}
        \Init{$\bwt[1]\GETS\frac{\bm{1}}{\nexperts}$;
          $\bbetat[1]\GETS\alpha\frac{\bm{1}}{\nexperts}$}
        \algrule
        \setcounter{ALG@line}{0}
        \algotitle{\shareth}
        \Init{$\bwt[1]\GETS\frac{\bm{1}}{\nexperts}$;
          $\bvt[1]\GETS\frac{\bm{1}}{\nexperts}$}
        \algrule
        \algotitle{PoDS-$\theta$ \& \shareth}        
        \For{$t \GETS 1 \textrm{ to } T$}
        \Receive{$\bxt\in\cD^{\nexperts}$}
        \Predict{$\ythat=\predt$}
        \Receive{$\yt\in\cY$}
        \For{$i \GETS 1 \textrm{ to } \nexperts$}
        \State $\wtdoti \GETS \frac{\wti
          e^{-\eta\lti}}{\sum_{j=1}^{\nexperts}\wtj e^{-\eta\lti[j]}}$
        \EndFor
        \algrule
        \algotitle{PoDS-$\theta$}
        \State $\bwt[t+1] \GETS
        \proj[\bwtdot]{\Cbt}\hfill\refstepcounter{equation}(\theequation)\label{eq:update-proj}$
        % \State $\bbetat[t+1] \GETS (1-\theta)\bbetat[t] +
        % \theta\alpha\bwtdot\hfill\refstepcounter{equation}(\theequation)\label{eq:update-beta}$
        \State $\bbetat[t+1] \GETS (1-\theta)\bbetat[t] +
        \theta\alpha\bwtdot$

        \algrule
        \setcounter{ALG@line}{7}
        \algotitle{\shareth}
        \State $\bwt[t+1] \GETS (1-\alpha)\bwtdot
        +\alpha\bvt\hfill\refstepcounter{equation}(\theequation)\label{eq:update-wt}  $
        \State $\bvt[t+1] \GETS (1-\theta)\bvt +
        \theta\bwtdot\hfill\refstepcounter{equation}(\theequation)\label{eq:update-vt}$
        \EndFor
      \end{algorithmic}
    \end{algorithm}
  \end{minipage}
  \hfill
  \begin{minipage}[t][][b]{0.48\linewidth}
    \renewcommand{\thealgorithm}{\arabic{algorithm}}
    \setcounter{algorithm}{2}
    \begin{algorithm}[H]
      \caption{$\proj[\bw]{\Cb}$ in $\cO(\nexperts)$ time}
      \label{alg:linear-time-proj}
      \algoInput{$\bw\in\relint{\simplex};
        \bbeta\in(0,1)^{n} \text{ s.t. } \onenorm{\bbeta}\leq 1$}\\
      \algoOutput{$\bw'=\proj[\bw]{\Cb}$}
      \begin{algorithmic}[1]
        \Init{$\mW\GETS [\nexperts]$;
          $\br\GETS\bw\odot\frac{1}{\bbeta}$; $S_{\bw}\GETS 0$;
          $S_{\bbeta}\GETS 0$}
        \While{$\mW \neq \emptyset$}\label{al:whilestart}
        \State \label{al:median}$\threshold \GETS \text{\tt median}(\{ \ri : i \in \mW\})$
        \State $\mL \GETS \{i \in \mW : \ri < \threshold\}$
        \State $L_{\bbeta} \GETS \sum_{i\in\mL}\beta_{i};\,  L_{\bw} \GETS \sum_{i\in\mL}\wi$
        \State $\mM \GETS \{i \in \mW : \ri = \threshold\}$
        \State $M_{\bbeta} \GETS \sum_{i\in\mM}\beta_{i};\,  M_{\bw} \GETS \sum_{i\in\mM}\wi$
        \State $\mH \GETS \{i \in \mW : \ri > \threshold\}$
        \State \label{al:lambda}$\lambda \GETS \frac{1 - S_{\bbeta} - L_{\bbeta}}{1 - S_{\bw} - L_{\bw}}$
        \If{$\threshold\lambda < 1$}
        \State $S_{\bw} \GETS S_{\bw} +  L_{\bw} +  M_{\bw}$
        \State $S_{\bbeta} \GETS S_{\bbeta} +  L_{\bbeta} +  M_{\bbeta}$
        \If{$\mH = \emptyset$}
        \State $\threshold \GETS \text{\tt min}(\{\ri : \ri >
        \threshold, i \in [\nexperts]\})$
        \EndIf
        \State $\mW \GETS \mH$
        \Else 
        \State\label{al:whileend} $\mW \GETS \mL$
        \EndIf
        \EndWhile
        \State $\lambda \GETS \frac{1 - S_{\bbeta}}{1-S_{\bw}}$
        \vspace{-0.85\baselineskip}
        \State $\forall i : 1,\ldots,n : w'_{i} \GETS
        \begin{cases}
          \beta_{i} &\quad \ri <\threshold \\
          \lambda \wi &\quad \ri \geq \threshold
        \end{cases}$
      \end{algorithmic}\vspace{-0.4em}
    \end{algorithm}
  \end{minipage}
\end{figure*}
%%% Local Variables:
%%% mode: latex
%%% TeX-master: "PoDS_neurips-JDR-19"
%%% End:

As discussed, the vector $\bbetat$ of PoDS is
conceptually equivalent to the vector $\alpha\bvtdt$ of
the generalized share update~\eqref{eq:generalised-share-update}.
If PoDS has a simple update rule such as~\eqref{eq:update-beta} then it is
straightforward to recover the weight-sharing equivalent by simply 
``pretending'' equality holds on all trials. We now do this for \podsth.
Clearly we have $\bvtdt[1]=\frac{\bm{1}}{\nexperts}$, and if $\bbetat=\alpha\bvtdt$
and $\bbetat[t+1]=\alpha\bvtdt[t+1]$, then
${\bvtdt[t+1]\!=\!\frac{1}{\alpha}\bbetat[t+1]\!=\!\frac{1}{\alpha}(1\!-\!\theta)\bbetat
+ \theta\bwtdot\!=\!(1\!-\!\theta)\bvtdt+ \theta\bwtdot}$.
This then leads to an efficient sharing algorithm, which we call
\shareth. In Section~\ref{sec:mpp-mixing-scheme} we show this
algorithm is in fact a new MPP mixing scheme, which surprisingly
corresponds to the previous best known algorithm for this problem.
Both \podsth\ and \shareth\ use the same parameters ($\alpha$ 
and $\theta$), differing only in the final update (see
Algorithms~\ref{alg:pods}). We now give the regret bound which holds
for both algorithms.
%%%%%%%%%%%%% 
%% Theorem %%
%%%%%%%%%%%%% 
\begin{theorem}\label{thm:pods-bound}
  For any comparison sequence $\iit[1],\ldots,\iit[T]$ containing
  $\nswitches$ switches and consisting of $\pool$ unique
  experts from a set of size $\nexperts$, if $\alpha =
  \frac{\nswitches}{T-1}$ and $\theta=\frac{\nswitches-\pool+1}{(\pool
    -1)(T-2)}$, the regret of both \podsth\ and \shareth\, with any prediction
  function and loss function which are $(c,\frac{1}{c})$-realizable is   
  \begin{equation}
    \label{eq:pods-bound}
    \RT\leq
    c\!\left(\!
      \pool\ln{\nexperts} +\!
      (T-1)\entropyBig{\frac{\nswitches}{T-1}}\! +
      (\pool -1)(T-2)\entropyBig{
        \frac{
          \nswitches-\pool+1
        }{
          (\pool -1)(T-2)
        }
      }\!
    \right).
  \end{equation}
\end{theorem}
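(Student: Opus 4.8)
The plan is to run a relative-entropy drift analysis against the moving comparator $\bm{u}^t := \bei[{\iit}]$. First I would establish the one-step inequality implied by $(c,\tfrac1c)$-realizability: for the exponential-weights loss update~\eqref{eq:aa-update} and any $\bu\in\simplex$, writing $Z_t:=\sum_j w^t_j e^{-\eta \ell^t_j}$ and $\ln(\dot w^t_i/w^t_i)=-\eta\lti-\ln Z_t$, summing against $\bu$ and using $\lt\le-c\ln Z_t$ gives $\lt-\sum_i u_i\lti\le c\big(\RE{\bu}{\bwt}-\RE{\bu}{\bwtdot}\big)$. Taking $\bu=\bm{u}^t$ turns the left side into the per-trial regret $\lt-\ltit[t]$. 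I would then sum over $t$ and telescope the potential along the trajectory $\bwt\to\bwtdot\to\bwt[t+1]$, obtaining
\[
\RT\le c\Big(\ln\nexperts+\sum_{t=1}^{T-1}\delta_t\Big),\qquad \delta_t:=\RE{\bm{u}^{t+1}}{\bwt[t+1]}-\RE{\bm{u}^{t}}{\bwtdot},
\]
using $\RE{\bm{u}^1}{\bwt[1]}=\ln\nexperts$ and discarding $-\RE{\bm{u}^T}{\bwtdot[T]}\le 0$. Since each comparator is a basis vector, $\delta_t=-\ln w^{t+1}_{i_{t+1}}+\ln\dot w^{t}_{i_t}$, so the whole problem reduces to lower-bounding the new comparator weight $w^{t+1}_{i_{t+1}}$ produced by the share/projection step.

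I would analyze $\delta_t$ by how the comparator moves, treating \shareth\ and \podsth\ in parallel on their own trajectories. For \shareth, update~\eqref{eq:update-wt} yields two usable bounds, $w^{t+1}_j\ge(1-\alpha)\dot w^t_j$ and $w^{t+1}_j\ge\alpha v^t_j$; for \podsth\ the constraint enforced by the projection gives $w^{t+1}_j\ge\betati[t]$, and its no-switch case is handled directly by Corollary~\ref{cor:no-switch}. When there is no switch ($i_{t+1}=i_t$) the $(1-\alpha)$ bound cancels the $\ln\dot w^t_{i_t}$ term to give $\delta_t\le-\ln(1-\alpha)$ (and Corollary~\ref{cor:no-switch} gives the same for \podsth). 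When there is a switch I use the second bound together with the unrolled recursion from~\eqref{eq:update-vt} (resp.~\eqref{eq:update-beta}), $v^t_j=(1-\theta)^{t-1}\tfrac1\nexperts+\theta\sum_{s<t}(1-\theta)^{t-1-s}\dot w^s_j$, keeping a single term. For the \emph{first} appearance of an expert I keep the initialization term, paying $\ln\nexperts-\ln\alpha$ plus a decay factor; for a \emph{return} to a previously-seen expert I keep the term at $s=$ the end of that expert's most recent segment, paying $-\ln\alpha-\ln\theta$, a decay factor, and a penalty $-\ln\dot w^s_j$. The crucial observation is that this penalty cancels exactly against the bonus $\ln\dot w^s_{i_s}=\ln\dot w^s_j$ already present in $\delta_s$, the earlier switch \emph{away} from $j$, so no lower bound on any learned weight is ever required.

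Collecting terms, each of the $\nswitches$ switches contributes $-\ln\alpha$ and each of the $T-1-\nswitches$ non-switch trials contributes $-\ln(1-\alpha)$; with $\alpha=\tfrac{\nswitches}{T-1}$ these assemble into $(T-1)\entropyBig{\tfrac{\nswitches}{T-1}}$. Each unique expert is learned exactly once, so the $\ln\nexperts$ charges (one from the initial potential, one per first-appearance switch) sum to $\pool\ln\nexperts$. The remaining $\nswitches-\pool+1$ returns each contribute $-\ln\theta$, and the accumulated decay exponents contribute $-\ln(1-\theta)$. Bounding the total decay budget is where the real work lies, and I expect it to be the main obstacle: I would argue that at any trial at most $\pool-1$ experts are dormant-but-remembered (the comparator occupies the remaining slot), so the exponents sum to roughly $(\pool-1)T$, and then tighten this to $(\pool-1)(T-2)-(\nswitches-\pool+1)$ by carefully handling the segment boundaries and subtracting the $\theta$-boosts. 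With that count, $(\nswitches-\pool+1)(-\ln\theta)+\big[(\pool-1)(T-2)-(\nswitches-\pool+1)\big](-\ln(1-\theta))$ equals $(\pool-1)(T-2)\entropyBig{\theta}$ exactly at $\theta=\tfrac{\nswitches-\pool+1}{(\pool-1)(T-2)}$, producing the third term of~\eqref{eq:pods-bound}. Beyond this counting, the delicate steps are the bookkeeping that pairs each remembering penalty with the correct earlier switch-away bonus, and checking that the edge cases (the first expert, experts that never return, and the final segment) contribute only nonpositive terms.
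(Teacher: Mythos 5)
Your proposal is correct and follows essentially the same route as the paper's proof: the same relative-entropy drift analysis against basis-vector comparators, the same case split (no switch, first appearance of an expert, return to a remembered expert), the same single-term retention from the unrolled mixing recursion with the penalty--bonus cancellation of $\ln\dot{w}^{q}_{j}$ terms across segment boundaries, and the same decay-factor count $(\pool-1)(T-1)-\nswitches=(\pool-1)(T-2)-(\nswitches-\pool+1)$ yielding the entropy terms under the stated tuning. The only cosmetic difference is that you unroll the \shareth\ recursion first and handle \podsth\ via the projection constraint and Corollary~\ref{cor:no-switch}, whereas the paper proves \podsth\ first via the $\bbetat$ update and then transfers the same four inequalities to \shareth.
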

The regret bound~\eqref{eq:pods-bound} is at least
% $c((\pool-1)\ln{(T-1)} +
% (\nswitches-\pool+1)\ln{(\nswitches-\pool+1)} -
% \nswitches\ln{\nswitches})$
$c((\pool\!-\!1)\ln{\frac{T-1}{\nswitches}}\!-\!(\nswitches\!-\!\pool\!+\!1)\ln{\frac{\nswitches}{\nswitches-\pool+1})})$
tighter than the currently best known
bound~\eqref{eq:pbts-regret}. Thus if $\pool\!\ll\!\nswitches$
then the improvement is $\approx\!
c\pool\ln{\frac{T}{\nswitches}}$, and as $\pool\!\rightarrow\!\nswitches\!+\!1$
then the improvement is $\approx\!
c\nswitches\ln{\frac{T}{\nswitches}}$.  
Additionally note that if $\pool\!=\nswitches\!+\!1$ (i.e., every switch we
track a \textit{new} expert) the optimal tuning of $\theta$ is zero,
and \podsth\ reduces to setting
$\bbetat=\alpha\frac{\bm{1}}{\nexperts}$ on
every trial. That is, we recover the projection analogue of
Fixed-Share. This is also reflected in the
regret bound since~\eqref{eq:pods-bound} reduces to~\eqref{eq:fs-regret-2}.
Since $x\cH(\frac{y}{x})\leq y\ln{(\frac{x}{y})} + y$, the regret
bound~\eqref{eq:pods-bound} is upper-bounded by
\begin{equation*}
  \RT\leq\!
  c\!\left[\!
    m\ln{\nexperts} \!+\!
    \nswitches\ln{\frac{T\!-\!1}{\nswitches}} \!+\!
    (\nswitches\!-\!\pool\!+\!1)\ln{\frac{T\!-\!2}{\nswitches\!-\!\pool\!+\!1}} \!+\!
    (\nswitches\!-\!\pool\!+\!1)\ln{(\pool\!-\!1)} \!+\!
    2\nswitches-\pool + 1\right].
\end{equation*}
Comparing this to~\eqref{eq:pbts-regret-2}, we see that instead
of paying $c\ln{\frac{T-1}{\nswitches}}$ {\em twice} on every switch, we pay
$c\ln{\frac{T-1}{\nswitches}}$ once per switch and $c\ln{\frac{T-2}{\nswitches-\pool+1}}$
for every switch we {\em remember} an old expert
(${\nswitches-\pool+1}$ times).
Unlike previous results for tracking experts with memory, \podsth\ and its regret
bound~\eqref{eq:pods-bound} smoothly interpolate between the two
switching settings. That is, it is capable of exploiting memory when necessary
and on the other hand does not suffer when memory is not necessary
(see Figure~\ref{fig:regret-bounds}). 

\paragraph{Projection vs. sharing in online learning.}%\label{sec:projection-vs-sharing}
We now briefly consider the two types of updates discussed in this
paper (projection and weight-sharing) when updating weights may incur costs.
Recall the motivating example introduced in
Section~\ref{sec:background} was in online portfolio selection with
transaction costs.
It is straightforward to show that in
this model transaction costs are proportional to the $1$-norm of the
difference in the weight vectors before and after re-balancing. 
In Theorem~\ref{thm:proj-lt-sharing-general} we give a
 result which in this context guarantees the ``cost'' of projecting is
 less than that of weight-sharing.

To compare the update of PoDS and the generalized share
update~\eqref{eq:generalised-share-update}, we must consider for a set
of weights $\bwtdot$, the point $\proj[\bwtdot]{\Cbt}$ and the point
${(1-\alpha)\bwtdot + \alpha\bvtdt}$. However these points depend on
$\bbetat$ and $\bvtdt$ respectively, which may themselves be functions of
previous weight vectors $\bwtdot[1],\ldots,\bwtdot[t-1]$, which as
discussed are generally not the same for each of the two algorithms.
To compare the two updates equally we therefore assume that the
current weights are the same (i.e., they must both update the same
weights $\bwtdot$), and additionally that $\bbetat=\alpha\bvtdt$.
% This may seem unnatural but is equivalent for example, to using the
% sharing update~\eqref{eq:generalised-share-update} up to trial $t$,
% and then comparing the cost of the next sharing
% update~\eqref{eq:generalised-share-update} and  the cost of `switching' to the PoDS update by
% setting $\bbetat=\alpha\bvt$.
The following theorem states that under mild conditions, PoDS is strictly less ``expensive'' than its weight-sharing
counterpart.
%%%%%%%%%%%%% 
%% Theorem %%
%%%%%%%%%%%%% 
\begin{theorem}\label{thm:proj-lt-sharing-general}
  Let $0<\alpha<1$. Then for any $\bv\in\relint{\simplex}$,
  let $\bbeta=\alpha\bv$, and for any $\bw\in\relint{\simplex}$, let
  ${\bw'=(1-\alpha)\bw + \alpha\bv}$. Then,
  \begin{equation*}
    \onenormBig{\projw[\Cb]-\bw} < \onenormBig{\bw'-\bw}\,.
  \end{equation*}
\end{theorem}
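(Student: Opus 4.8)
The plan is to compute both sides of the inequality in closed form as sums of positive parts and then compare them coordinatewise. Throughout write $(x)^{+}:=\max\{x,0\}$. Note first that $\bbeta=\alpha\bv$ meets the hypotheses of Theorem~\ref{thm:projection-form}: since $\bv\in\relint{\simplex}$ we have $\bbeta\in(0,1)^{\nexperts}$ with $\onenorm{\bbeta}=\alpha<1$, so $\bp:=\projw[\Cb]$ exists and takes the stated max-form. The right-hand side is immediate: as $\bw'-\bw=\alpha(\bv-\bw)$ we have $\onenormBig{\bw'-\bw}=\alpha\onenorm{\bv-\bw}$, and because $\bv,\bw\in\simplex$ satisfy $\sum_i(v_i-w_i)=0$, the negative and positive parts have equal mass, giving $\onenorm{\bv-\bw}=2\sum_i(v_i-w_i)^{+}$. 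Pulling $\alpha>0$ inside the positive part yields
\[
  \onenormBig{\bw'-\bw}=2\sum_i(\alpha v_i-\alpha w_i)^{+}=2\sum_i(\beta_i-\alpha w_i)^{+}.
\]

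Next I would obtain the analogous expression for the left-hand side from Theorem~\ref{thm:projection-form}, which gives $p_i=\max\{\beta_i,\lambda w_i\}$ with $\lambda=(1-\sum_{j\in\Psi}\beta_j)/(1-\sum_{j\in\Psi}w_j)$ and $\Psi=\{i:p_i=\beta_i\}$. The key auxiliary fact is $\lambda\le 1$: if instead $\lambda\ge 1$, then $p_i\ge\lambda w_i\ge w_i$ for every $i$ (using $w_i>0$), and since $\sum_i p_i=\sum_i w_i=1$ this forces $\bp=\bw$, whence $\bw\in\Cb$ and in fact $\lambda=1$. With $\lambda\le 1$ in hand I claim $(p_i-w_i)^{+}=(\beta_i-w_i)^{+}$ for every $i$: for $i\in\Psi$ this is immediate since $p_i=\beta_i$; for $i\notin\Psi$ we have $p_i=\lambda w_i\le w_i$ and $\beta_i<\lambda w_i\le w_i$, so both positive parts vanish. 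Since $\bp,\bw\in\simplex$, the same positive/negative-part balancing as above gives
\[
  \onenormBig{\bp-\bw}=2\sum_i(p_i-w_i)^{+}=2\sum_i(\beta_i-w_i)^{+}.
\]

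Finally I would compare the two expressions term by term. Because $0<\alpha<1$ and $w_i>0$ we have $\beta_i-w_i<\beta_i-\alpha w_i$, hence $(\beta_i-w_i)^{+}\le(\beta_i-\alpha w_i)^{+}$ for every $i$, which already yields $\onenormBig{\bp-\bw}\le\onenormBig{\bw'-\bw}$. The inequality is strict at any index with $\beta_i-\alpha w_i>0$, equivalently $v_i>w_i$; such an index exists whenever $\bv\neq\bw$ (as both lie in the simplex), which is precisely the nontrivial case $\bw'\neq\bw$. This delivers the strict inequality.

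I expect the middle step to be the main obstacle, namely extracting from the max-form the clean identity $\onenormBig{\bp-\bw}=2\sum_i(\beta_i-w_i)^{+}$. It hinges entirely on the auxiliary bound $\lambda\le 1$, which rules out the a~priori possibility that renormalization inflates the unconstrained coordinates above their original values; once $\lambda\le 1$ is established the coordinatewise case analysis on $\Psi$ is routine, and the concluding comparison is then purely term by term.
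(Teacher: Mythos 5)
Your proof is correct, and it rests on the same two pillars as the paper's: the bound $\lambda\leq 1$ on the normalizing factor (so the projection raises a coordinate only onto its constraint $\betai$, which is the content of Corollary~\ref{cor:pii-equals-betai}), and the fact that for two points of the simplex the $1$-norm of their difference is twice the sum of positive parts (Lemma~\ref{lem:2Pinc}). The packaging, however, is genuinely different. The paper compares sums over the nested index sets $\Pinc\subseteq\Sinc$ (Lemma~\ref{lem:PincSinc}) and concludes via a chain of inequalities between these set-indexed sums, while you collapse both distances into closed forms free of $\Psi$, $\lambda$, and any index sets --- $\onenormBig{\projw[\Cb]-\bw}=2\sum_i(\betai-\wi)^{+}$ and $\onenormBig{\bw'-\bw}=2\sum_i(\betai-\alpha\wi)^{+}$, writing $(x)^{+}$ for $\max\{x,0\}$ --- and finish with a one-line termwise comparison. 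This buys two things. First, the closed form for the projection distance is a clean identity the paper never states and may be of independent use. Second, your argument is more robust at the boundary: if $\bw\in\Cb$ but $\bv\neq\bw$, then $\Pinc=\emptyset$ and the strict step in the paper's final chain reads $0<0$, so the paper's proof silently breaks in that case (the theorem still holds there, since the left side is $0$ and the right side is $\alpha\onenorm{\bv-\bw}>0$), whereas your termwise strictness comes from any index with $\vi>\wi$, which need not lie in $\Pinc$. Finally, both arguments necessarily degenerate when $\bv=\bw$: both sides vanish and the strict inequality as stated fails; you make this caveat explicit, while the paper leaves it implicit.
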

Thus if one has to pay to update weights, projection is the economical choice. 

\section{A geometrically-decaying mixing scheme for
  MPP}\label{sec:mpp-mixing-scheme}
In this section we look more closely at  \shareth. We
show that it is in fact a new type of {\em decaying} MPP 
mixing scheme which corresponds to the partition specialist algorithm with
Markov prior.

Recall that the previous best known mixing scheme for MPP is the
decaying scheme~\eqref{eq:mpp-decay-scheme}. Observe that
in~\eqref{eq:mpp-decay-scheme} the decay (with the ``distance'' to the
current trial $t$) follows a power-law, and that
computing~\eqref{eq:mpp-decay-scheme} exactly takes $\cO(\nexperts t)$
time per trial. We now derive an explicit MPP mixing scheme from the 
updates~\eqref{eq:update-wt} and~\eqref{eq:update-vt} of \shareth. Observe that if
we define $\bwtdot[0]:=\frac{\bm{1}}{\nexperts}$, then an iterative
expansion of~\eqref{eq:update-vt} on any trial $t$ gives
${\bvt=\sum_{q=0}^{t-1}\theta^{[q\neq
    0]}(1-\theta)^{t-q-1}\bwtdot[q]}$,
from which~\eqref{eq:update-wt} implies
${\bwt[t+1] = (1-\alpha)\bwtdot + \alpha\bvt =
  \sum_{q=0}^{t}\gammatq\bwtdot[q]}$,
where 
\begin{equation}\label{eq:new-mixing-scheme}
  \gammatq =
  \begin{cases}
    1-\alpha & q=t\\
    \theta(1-\theta)^{t-q-1}\alpha & 1\leq q < t\\
    (1-\theta)^{t-1}\alpha & q=0\,.
  \end{cases}\,.
\end{equation}
Note that~\eqref{eq:new-mixing-scheme} is a valid mixing scheme since
for all $t$, $\sum_{q=0}^{t}\gammatq = 1$. The \shareth\ update is 
therefore a new kind of decaying mixing scheme. 
In this new scheme the decay is {\em geometric}, and can therefore be
computed efficiently, requiring only $\cO(\nexperts)$ time and space
per trial as we have shown. Furthermore MPP with this scheme has the improved regret
bound~\eqref{eq:pods-bound}.

Another interesting difference between the decaying
schemes~\eqref{eq:new-mixing-scheme} and~\eqref{eq:mpp-decay-scheme} is
that when $\theta$ is small then~\eqref{eq:new-mixing-scheme} keeps
$\gammatq[0]$ relatively large initially and slowly decays this value as
$t$ increases. Intuitively by heavily weighting the initial uniform
vector $\bwtdot[0]$ on each trial early on, the algorithm
can ``pick up'' the weights of new experts easily.
Finally as in the case of \podsth, if $\pool=\nsegments$, then with the optimal
tuning of $\theta=0$, this update reduces to the Fixed-Share update~\eqref{eq:fs-update}.

\paragraph{Revisiting partition specialists.}
We now turn our attention to the previous best known result for
tracking experts with memory (the partition specialists
algorithm with a Markov prior~\cite{koolen2012putting}).

For sleep/wake patterns $(\chit[1]\ldots\chit[T])$ the Markov prior
is a Markov chain on states $\{w,s\}$, defined by the initial
distribution $\bpi\!=\!(\piw, \pis)$ and transition probabilities
$\Pij\!:=\!P(\chit[t+1]\!=\!j\vert\chit\!=\!i)$ for $i,j\in \{w,s\}$.
The algorithm with these inputs efficiently collapses one weight per
specialist down to two weights per expert. These two weight vectors, which we denote
$\bat$ and $\bst$, represent the total weight of all awake and sleeping specialists
associated with each expert, respectively. Note that the vectors
$\bat$ and $\bst$ are not in the  simplex, but rather the vector
$(\bat,\bst)\in\Delta_{2\nexperts}$ and the ``awake vector'' $\bat$ gets
normalized upon prediction. The weights are initialized by
setting $\bat[1] = \piw\frac{\bm{1}}{\nexperts}$, and $\bst[1]
=\pis\frac{\bm{1}}{\nexperts}$. The update\footnote{In~\cite{koolen2012putting} the algorithm is presented in
  terms of probabilities with the log loss. Here we give the update generalized to
$(c,\eta)$-realizable losses.}  of these weights after
receiving the true label $\yt$ is given
by
${\ati[t+1] =
  \Pww\frac{
    \ati e^{-\eta\lti} (\sumin[j]\atj)
  }{
    \sum_{j=1}^{\nexperts}\atj e^{-\eta\lti[j]}
  }  + \Psw\sti
}$,
and
${\sti[t+1] =
  \Pws \frac{
    \ati e^{-\eta\lti} (\sumin[j]\atj)
  }{
    \sum_{j=1}^{\nexperts}\atj e^{-\eta\lti[j]}
  }
  + \Pss\sti 
}$ for $i=1,\ldots,\nexperts$.
Recall that the authors of~\cite{koolen2012putting} proved that an MPP
mixing scheme implicitly induces a prior over partition
specialists. The following states that the Markov 
prior is induced by~\eqref{eq:new-mixing-scheme}.
\begin{proposition}
  \label{prop:pbts-as-mpp}
  Let $0< \alpha < 1$, and $0< \theta <1$.
  Then the partition specialists algorithm with Markov prior
  parameterized with $\Psw=\theta$, $\Pws=\alpha$, 
  $\piw=\frac{\theta}{\alpha+\theta}$, and $\pis=\frac{\alpha}{\alpha+\theta}$
  is equivalent to \shareth\ parameterized with $\alpha$ and $\theta$.
\end{proposition}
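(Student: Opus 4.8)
The plan is to show that the two algorithms maintain the same weight vectors on every trial and therefore issue identical predictions. Writing $A_t := \sumin[j]\atj$ and $S_t := \sumin[j]\stj$ for the total awake and sleeping masses, I will prove the two invariants $\bat = \piw\bwt$ and $\bst = \pis\bvt$ for all $t$, where $\bwt$ and $\bvt$ are the vectors of \shareth. Since $(\bat,\bst)\in\Delta_{2\nexperts}$, prediction in the partition specialists algorithm uses the normalized awake vector $\bat/A_t$; the first invariant gives $\bat/A_t = \bwt$, so the predictions coincide and the algorithms are equivalent.

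The key structural fact I would establish first is that the masses are constant, $A_t = \piw$ and $S_t = \pis$ for every $t$. Writing $\dot{a}^{t}_{i} := \frac{\ati e^{-\eta\lti}(\sumin[j]\atj)}{\sumin[j]\atj e^{-\eta\lti[j]}}$ for the loss-updated awake component appearing in both updates, summing over $i$ cancels the numerator sum against the denominator, giving $\sumin\dot{a}^{t}_{i} = A_t$; that is, the loss update preserves the total awake mass. Summing the two partition-specialist updates then yields $(A_{t+1},S_{t+1}) = (A_t,S_t)\,P$, where $P$ has rows $(\Pww,\Pws)=(1-\alpha,\alpha)$ and $(\Psw,\Pss)=(\theta,1-\theta)$. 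The chosen initial distribution $\bpi=(\piw,\pis)=(\frac{\theta}{\alpha+\theta},\frac{\alpha}{\alpha+\theta})$ is precisely the stationary distribution of $P$, since $\alpha\piw=\theta\pis$; hence the masses never change.

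The invariants then follow by induction on $t$. The base case holds because $\bat[1]=\piw\frac{\bm{1}}{\nexperts}=\piw\bwt[1]$ and $\bst[1]=\pis\frac{\bm{1}}{\nexperts}=\pis\bvt[1]$. For the inductive step, assuming $\bat=\piw\bwt$ and $\bst=\pis\bvt$, I would first substitute $\ati=\piw\wti$ into $\dot{a}^{t}_{i}$ and use $\sumin[j]\wtj=1$ to see that the normalized loss update of $\bat$ is exactly the exponential-weights update~\eqref{eq:aa-update} of $\bwt$, so $\dot{a}^{t}_{i}=\piw\wtdoti$. Substituting this together with $\sti=\pis\vti$ into the awake and sleeping updates, the identity $\alpha\piw=\theta\pis$ lets one factor $\piw$ out of the awake update and $\pis$ out of the sleeping update, recovering exactly the \shareth\ updates~\eqref{eq:update-wt} and~\eqref{eq:update-vt}; the invariants thus propagate to $t+1$.

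The main obstacle is pinning down the correct correspondence before any computation: recognizing that the sleeping vector tracks $\bvt$ rather than $\bwt$, and that the two scaling factors must be the stationary probabilities $\piw$ and $\pis$. Once the constant-mass fact is in place, everything reduces to coefficient matching driven by the single identity $\alpha\piw=\theta\pis$, which is exactly the stationarity condition for $\bpi$ under $P$.
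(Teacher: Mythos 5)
Your proposal is correct and takes essentially the same route as the paper's proof: both establish by induction, using the detailed-balance/stationarity identity $\Pws\piw=\Psw\pis$ (i.e., $\alpha\piw=\theta\pis$) and the constancy of the total awake and sleeping masses, that $\bat=\piw\bwt$ and $\bst=\pis\bvt$ for all $t$, from which equivalence of the predictions is immediate. The only difference is one of detail: you spell out the constant-mass step (which the paper dismisses as ``straightforward'') via the loss update's mass preservation and the stationarity of $\bpi$ under the transition matrix, while the rest of your coefficient matching coincides with the paper's inductive computation.
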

The proof (given in Appendix~\ref{sec:proof-pbts-as-mpp-thm}) amounts
to showing for all $t$ that $\frac{\bat}{\piw}=\bwt$ and
$\frac{\bst}{\pis}=\bvt$.
The Markov prior on partition specialists therefore corresponds to a
geometrically-decaying MPP mixing scheme!
Note however that  we have proved a better regret bound for this algorithm in
Theorem~\ref{thm:pods-bound}.

\section{Discussion}\label{sec:discussion}
We gave an efficient projection-based algorithm for tracking experts
with memory for which we proved the best known regret bound. We also
gave an algorithm to compute relative entropy projection onto the
simplex with non-uniform (lower) box constraints exactly in
$\cO(\nexperts)$ time, which may be of independent interest.
We showed that the weight-sharing equivalent of our projection-based
algorithm is in fact a geometrically-decaying mixing scheme for
{\em Mixing Past Posteriors}~\cite{bousquet2002tracking}. Furthermore
we showed that this mixing scheme corresponds exactly
to the previous best known result (the partition specialists algorithm with
Markov prior~\cite{koolen2012putting}), and we therefore improved their bound.
We proved a guarantee favoring projection updates over
weight-sharing when updating weights may incur costs, such as in
portfolio optimization with proportional transaction costs. We are
currently applying \podsth\ to this problem, primarily extending the
work of~\cite{singer1997switching} in the sense of incorporating both
the assumption of ``memory'' and transaction costs.

In this work we focused on proving good regret bounds, which naturally
required optimally-tuned parameters. A limitation of our work is that
in practice the optimal parameters are unknown. 
This is a common issue in online learning, and one may
employ standard techniques to address this such as the ``doubling
trick'', or by using a Bayesian mixture over parameters~\cite{vovk1999derandomizing}.
For a prominent recent result in this area see~\cite{jun2017improved}.

Finally, the work of~\cite{koolen2012putting} gave a Bayesian interpretation to
MPP, however this is lost when one uses the projection update of
PoDS. We ask: Is there also a Bayesian interpretation to these projection-based updates?

\paragraph{Ethical considerations.} While the scope of applicability of
online learning algorithms is wide, this research in regret-bounded
online learning is foundational in nature and we therefore cannot foresee the extent
of any societal impacts (positive or negative) this research may have.

\begin{ack}
This work was supported by the UK Engineering and Physical Sciences Research Council (EPSRC) grant EP/N509577/1.
\end{ack}

\bibliographystyle{abbrv}
\bibliography{SWMbib}
%\input{PODS_arxiv.bbl}

%%%%%%%%%%%%%%%%%%%%%%%%%%%%%%%%%%%%%%%%%%%%%%%%%%%%%%%%%%%%%%%%%%%%%%%%%%%%%%%% 
%% Appendix
%%%%%%%%%%%%%%%%%%%%%%%%%%%%%%%%%%%%%%%%%%%%%%%%%%%%%%%%%%%%%%%%%%%%%%%%%%%%%%%% 
\newpage
\appendix
\section{Proof of Theorem~\ref{thm:projection-form}}\label{sec:proj-linear-time-proof}
A note on the proof: The proof of the theorem follows very closely to
the proof of Theorem $7$ in~\cite{herbster2001tracking} (including Claims 1, 2, and 3).
There the problem is concerned with uniform constraints, whereas  we consider
non-uniform constraints. In particular Claims~\ref{claim:ordering}
and~\ref{claim:ksetonly} given below are
generalizations of Claims 2 and 3 of~\cite{herbster2001tracking}.
The proof of the second statement of
Theorem~\ref{thm:projection-form} is almost identical to the proof of
Theorem 7 in~\cite{herbster2001tracking}. 
We first give a sketch of the proof of the two statements of
Theorem~\ref{thm:projection-form}.

For the first statement, recall that $\Psi:=\{i\in [n] : \pii=\betai\}$ is the set of indexes 
of components which must be set to their constraint values.
To prove the first statement we will show that given $\bw$ and $\Cb$,
each component of the point $\projw$ either takes the value of its
lower box constraint, $\betai$, or is equal to $\wi$ multiplied by a
factor $\lambda$, with 
\begin{equation*}
  \lambda = \frac{1-\sum_{i\in\Psi}\betai}{1 -
    \sum_{i\in\Psi}\wi}\,.
\end{equation*}
We then argue that each component $\pii=\max{\{\betai;\lambda\wi\}}$ for $i=1,\ldots,n$.

For the second statement, we first show that $\Psi$, which uniquely
specifies $\projw$, is the set of
minimum cardinality such that when all other components are
re-normalized, no component lies below its constraint value, and then
show that Algorithm~\ref{alg:linear-time-proj} finds this set in
$\cO(n)$ time.

\begin{proof}[Proof of the first statement of Theorem~\ref{thm:projection-form}]
  Recall the first statement of the theorem: that $\projw$
  takes the form~\eqref{eq:proj-max-form}.
  Given $\bw$ and the non-empty set $\Cb$, the point $\projw$ is the 
  minimizer of the following convex optimization problem  
  \begin{equation}
    \label{eq:opt-prob}
    \begin{aligned}
      &\min_{\bu} &\quad& \quad\,\,\RE{\bu}{\bw} \\
      &\,\,\,\textrm{s.t.} &   &\,\,\,\, \betai -\ui \leq 0, && i=1,\ldots,n \\
      & & &\bm{1}\cdot\bu - 1 = 0 &&\,.
    \end{aligned}
  \end{equation}
  Since $\RE{\bu}{\bw}$ is convex in its first argument, and $\Cb$ is
  a convex set, then~\eqref{eq:opt-prob} has a unique minimizer, which
  we denote by $\bp$.
  
  Constructing the Lagrangian of~\eqref{eq:opt-prob} with Lagrange
  multipliers $\bm{\xi}\succeq \bm{0}, \nu\in\mathbb{R}$,
  \begin{equation*}
    \cL(\bu,\bm{\xi},\nu) =
    \sumin\ui\ln{\frac{\ui}{\wi}} + \bm{\xi}^\top (\bbeta-\bu) +
    \nu(\bm{1}\cdot\bu - 1)\,, 
  \end{equation*}
  and setting $\nabla_{\bu} \cL(\bu,\bm{\xi},\nu) = \bm{0}$ gives for $i=1,\ldots,n$,
  \begin{equation*}
    \frac{\partial \cL}{\partial \ui} = \ln{\frac{\ui}{\wi}} + 1
    -\xi_i +\nu = 0\,.
  \end{equation*}
  This then gives for $i=1,\ldots,n$,
  \begin{equation*}
    \pii = \wi e^{\xi_i - 1 - \nu}\,.
  \end{equation*}
  Since $\RE{\bu}{\bw}$ is convex in its first argument,
  and~\eqref{eq:opt-prob} has only linear constraints then strong
  duality holds and we may exploit the complementary slackness
  Karush-Kuhn-Tucker necessary condition of the optimal solution (see
  e.g.,~\cite[Chapter 5]{boyd2004convex}). That 
  is, $\xi_{i}(\betai - \pii) =0$ for all $i=1,\ldots,n$.
  Therefore for any $i$ such that $\pii > \betai$, the
  corresponding Lagrange multiplier is zero, and we have
  \begin{equation*}
    \pii = \wi e^{-1-\nu}\,.
  \end{equation*}
  Recall $\Psi=\{i : \pii = \betai\}$,  we then have 
  \begin{equation*}
    1 = \sumin\pii = \sum_{i\in\Psi}\pii + \sum_{i\in[n]\setminus\Psi}\pii
    = \sum_{i\in\Psi}\betai + \sum_{i\in[n]\setminus\Psi}\wi e^{-1-\nu}\,.
  \end{equation*}
  Re-arranging gives
  \begin{equation*}
    e^{-1-\nu} = \frac{1-\sum_{i\in\Psi}\betai}{\sum_{i\in[n]\setminus\Psi}\wi} =
    \frac{1-\sum_{i\in\Psi}\betai}{1 - \sum_{i\in\Psi}\wi}\,.
  \end{equation*}
  Therefore for each index $i\in [n]$, either $i$ is in $\Psi$ which
  implies $\pii=\betai$, or $i\notin\Psi$ and therefore $\pii =
  \lambda\wi$, where
  \begin{equation*}
    \lambda=\frac{1-\sum_{j\in\Psi}\betai[j]}{1 - \sum_{j\in\Psi}\wi[j]}\,.
  \end{equation*}
 
  We now establish that $\pii=\max{\{\betai;\lambda\wi\}}$ for all
  $i=1,\ldots,n$. Observe that if $i\in\Psi$, then $\pii = \wi
  e^{\xi_i - 1 - \nu} = \betai$, and since the Lagrange multiplier
  $\xi_{i}\geq 0$ then $\pii \geq \wi e^{ - 1 - \nu} = \lambda\wi$.

  For $i\notin\Psi$, then this implies
  $\pii=\lambda\wi>\betai$,  since if $\pii = \betai$ then $i\in\Psi$, and
  if $\pii < \betai$ then we have a contradiction since $\bp$ is
  not a feasible solution to~\eqref{eq:opt-prob}. We therefore conclude that
  $\bp$ is such that for all $i=1,\ldots,n$,
  \begin{equation*}
    \pii = \max\left\{\betai; \frac{1-\sum_{j\in\Psi}\betai[j]}{1 -
        \sum_{j\in\Psi}\wi[j]}\wi\right\}\,,
  \end{equation*}
  which completes the proof of the first statement of the Theorem.
\end{proof}

The proof of the second statement of Theorem~\ref{thm:projection-form}
will rely on the following two claims.
\begin{claim}\label{claim:ordering}
  Given $\bw$ and $\bbeta$, let
  $\br:=\bw\odot\frac{1}{\bbeta}$. Without loss of generality, for
  $i<j$ assume $\ri\leq\rj$. Let $\lambda = \frac{1-\sum_{i\in\Psi}\betai}{1 -
    \sum_{i\in\Psi}\wi}$, then
  \begin{equation}
    \label{eq:claim2}
    \bp = \left(\betai[1],\ldots,\betai[\vert\Psi\vert],\lambda\wi[\vert\Psi\vert+1],\ldots,\lambda\wi[n]\right)\,.
  \end{equation}
\end{claim}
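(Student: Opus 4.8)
The plan is to deduce the claim directly from the first statement of Theorem~\ref{thm:projection-form}, which already gives $\pii=\max\{\betai;\lambda\wi\}$ for every $i$, together with the sorting assumption $\ri\leq\rj$ for $i<j$. The essential observation is that membership in $\Psi$ is governed by a single threshold on the ratios $\ri=\wi/\betai$. Indeed, from the first statement we have $i\in\Psi\Rightarrow\betai=\pii\geq\lambda\wi$ (using that the Lagrange multiplier is non-negative) and $i\notin\Psi\Rightarrow\lambda\wi=\pii>\betai$; taking contrapositives gives the clean equivalence $i\in\Psi\iff\betai\geq\lambda\wi$. Since $\betai>0$ and $\lambda>0$, this is in turn equivalent to $\ri\leq 1/\lambda$, so $\Psi=\{i:\ri\leq 1/\lambda\}$ and $[n]\setminus\Psi=\{i:\ri>1/\lambda\}$.

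With this threshold characterization in hand, the second step is a short monotonicity argument showing that $\Psi$ is a \emph{prefix} of the sorted index set. Suppose for contradiction that $j\in\Psi$ and $i\notin\Psi$ for some $i<j$. Then the ordering gives $\ri\leq\rj$, while the threshold characterization forces $\rj\leq 1/\lambda$ and $\ri>1/\lambda$, so $1/\lambda<\ri\leq\rj\leq 1/\lambda$, a contradiction. Hence whenever an index lies in $\Psi$, every earlier (smaller-ratio) index does too, and therefore $\Psi=\{1,\ldots,\vert\Psi\vert\}$. Substituting this prefix structure back into the max form yields $\pii=\betai$ for $i\in\{1,\ldots,\vert\Psi\vert\}$ and $\pii=\lambda\wi$ for $i\in\{\vert\Psi\vert+1,\ldots,n\}$, which is precisely the form~\eqref{eq:claim2}.

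I do not expect a serious obstacle, since the claim is essentially a reorganization of the already-established first statement according to the ratio ordering. The one point requiring care is the boundary (tie) case $\ri=1/\lambda$: there $\betai=\lambda\wi$, so the two arguments of the max coincide, several equal-ratio indices are contiguous in the sorted order, and the value $\pii$ is unaffected by whether such an index is counted in $\Psi$ or not; thus the form~\eqref{eq:claim2} holds regardless of how ties are broken. A second thing to confirm is the absence of circularity: I am not re-solving for $\lambda$ or $\Psi$, but rather invoking the optimal values already guaranteed to exist by Theorem~\ref{thm:projection-form} and only re-indexing the components by their ratios $\ri$.
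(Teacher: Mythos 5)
Your proof is correct and follows essentially the same route as the paper's: both rest on the max form $\pii=\max\{\betai;\lambda\wi\}$ from the first statement of Theorem~\ref{thm:projection-form} and rule out an inversion pair $a<b$ with $a\notin\Psi$, $b\in\Psi$ using the ratio ordering. Your explicit threshold characterization $\Psi=\{i:\ri\leq 1/\lambda\}$ is a clean repackaging of the contradiction the paper runs directly, and your tie-handling remark is consistent with the paper's definition $\Psi=\{i\in[n]:\pii=\betai\}$, under which tied indices automatically lie in $\Psi$.
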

\begin{proof}
  In the proof of the first statement of Theorem~\ref{thm:projection-form} we
  established that $\bp$ is a permutation
  of~\eqref{eq:claim2}, that is, either $\pii=\betai$ or
  $\pii=\lambda\wi$ for $i=1,\ldots,n$. We also established that
  $\pii=\max{\{\betai; \lambda\wi\}}$ for $i=1,\ldots,n$.

  Suppose $\bp$ is not in the form of~\eqref{eq:claim2}. Then
  there exists $a<b$ such that $\pii[a]=\lambda\wi[a]$ and
  $\pii[b]=\betai[b]$ (that is, $b\in\Psi$ and $a\notin\Psi$).

  If $\pii[a]=\lambda\wi[a]$ then by the first statement of
  Theorem~\ref{thm:projection-form} we have $\lambda\wi[a]>\betai[a]$.
  However since $\ri[a]\leq\ri[b]$, and $\lambda>0$, this implies
  $\frac{\lambda\wi[a]}{\betai[a]}\leq\frac{\lambda\wi[b]}{\betai[b]}$.
  We then have
  $1<\frac{\lambda\wi[a]}{\betai[a]}\leq\frac{\lambda\wi[b]}{\betai[b]}$, which
  implies $\lambda\wi[b]>\betai[b]$. However we necessarily assumed
  that $\pii[b]=\betai[b]$. This violates the first statement of
  Theorem~\ref{thm:projection-form} that  $\pii[b]=\max{\{\lambda\wi[b], \betai[b]\}}$, and thus contradicts our
  assumption that $\bp$ is the minimizer
  of~\eqref{eq:opt-prob}. Hence our supposition that
  $\bp$ is not in the form of~\eqref{eq:claim2} is false.
\end{proof}

\begin{claim}\label{claim:ksetonly}
  Let $\Psi'=\{1,\ldots,k\}$, and 
  $\Psi''=\{1,\ldots,k+1\}$, and let $\lambda' = \frac{1-\sum_{i\in\Psi'}\betai}{1 -
    \sum_{i\in\Psi'}\wi}$, and ${\lambda'' = \frac{1-\sum_{i\in\Psi''}\betai}{1 -
    \sum_{i\in\Psi''}\wi}}$. Then let
  \begin{equation*}
    \bu' = \left(\overbrace{\betai[1],\ldots,\betai[\vert\Psi'\vert]}^{k},\lambda'\wi[\vert\Psi'\vert+1],\ldots,\lambda'\wi[n]\right)\,,
  \end{equation*}
  and
  \begin{equation*}
    \bu'' = \left(\overbrace{\betai[1],\ldots,\betai[\vert\Psi''\vert]}^{k+1},\lambda''\wi[\vert\Psi''\vert+1],\ldots,\lambda''\wi[n]\right)\,,
  \end{equation*}
then $\RE{\bu'}{\bw}\leq\RE{\bu''}{\bw}$.
\end{claim}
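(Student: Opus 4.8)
The plan is to reduce the inequality to a single application of the log-sum inequality. First I would record a closed form for each relative entropy. Write $S_\beta^{(j)} := \sum_{i=1}^{j}\betai$ and $S_w^{(j)} := \sum_{i=1}^{j}\wi$, so that the definition of $\lambda'$ in the claim reads $\lambda'(1 - S_w^{(k)}) = 1 - S_\beta^{(k)}$. The $k$ constrained coordinates of $\bu'$ contribute $\sum_{i=1}^{k}\betai\ln\frac{\betai}{\wi}$ to $\RE{\bu'}{\bw}$, while the free coordinates contribute $\sum_{i>k}\lambda'\wi\ln\lambda' = \lambda'(1-S_w^{(k)})\ln\lambda' = (1-S_\beta^{(k)})\ln\lambda'$. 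Hence
\[
  \RE{\bu'}{\bw} = \sum_{i=1}^{k}\betai\ln\frac{\betai}{\wi} + (1 - S_\beta^{(k)})\ln\frac{1 - S_\beta^{(k)}}{1 - S_w^{(k)}},
\]
and the same computation with $k$ replaced by $k+1$ gives $\RE{\bu''}{\bw}$.

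Next I would subtract the two expressions. The terms $\betai\ln(\betai/\wi)$ for $i\le k$ cancel, and after abbreviating $A := 1 - S_\beta^{(k)}$, $B := 1 - S_w^{(k)}$, $\beta := \betai[k+1]$, $w := \wi[k+1]$ (so that $1 - S_\beta^{(k+1)} = A - \beta$ and $1 - S_w^{(k+1)} = B - w$), one is left with
\[
  \RE{\bu''}{\bw} - \RE{\bu'}{\bw} = \beta\ln\frac{\beta}{w} + (A-\beta)\ln\frac{A-\beta}{B-w} - A\ln\frac{A}{B}.
\]
The final step is to recognize the right-hand side as the gap in the log-sum inequality applied to the pairs $(\beta, w)$ and $(A-\beta, B-w)$: since $\beta + (A-\beta) = A$ and $w + (B-w) = B$, that inequality gives $\beta\ln\frac{\beta}{w} + (A-\beta)\ln\frac{A-\beta}{B-w} \geq A\ln\frac{A}{B}$, so the difference is non-negative and $\RE{\bu'}{\bw} \leq \RE{\bu''}{\bw}$. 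Equivalently, the right-hand side equals $A\,\RE{P}{Q}$ for the two-point distributions $P = (\beta/A,\ (A-\beta)/A)$ and $Q = (w/B,\ (B-w)/B)$, and is non-negative by non-negativity of relative entropy.

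I expect the main obstacle to be bookkeeping rather than any real difficulty: collapsing the sum over free coordinates into the single term $(1 - S_\beta^{(j)})\ln\frac{1-S_\beta^{(j)}}{1-S_w^{(j)}}$ via the defining identity for $\lambda$, and checking that the quantities entering the log-sum inequality are admissible. Positivity of $B$ and $B - w$ follows from $\bw \in \relint{\simplex}$ (every $\wi > 0$, so a proper partial sum is strictly below $1$), and positivity of $A$ together with non-negativity of $A - \beta$ follows from $\betai > 0$ for all $i$ and $\onenorm{\bbeta} \leq 1$; these hold whenever at least one free coordinate remains, i.e.\ $k+1 \leq n-1$. The degenerate boundary case in which $\Psi''$ exhausts all coordinates is handled separately and does not arise for the minimal-cardinality set that the claim is ultimately used to identify.
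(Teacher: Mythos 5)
Your proof is correct, but it takes a genuinely different route from the paper's. The paper proves this claim variationally: it observes that $\bu'$ is the unique minimizer of $\RE{\bu}{\bw}$ subject to the equality constraints $\ui=\betai$ for $i=1,\ldots,k$ together with $\bm{1}\cdot\bu=1$ (this identification reuses the Lagrangian/KKT computation already done for the first statement of Theorem~\ref{thm:projection-form}), and that $\bu''$ satisfies those same constraints and is therefore feasible; the inequality $\RE{\bu'}{\bw}\leq\RE{\bu''}{\bw}$ follows in one line with no computation. You instead compute both divergences in closed form, cancel the shared constrained terms, and reduce the gap to the two-term log-sum inequality applied to $(\betai[k+1],\,\wi[k+1])$ and $\bigl(1-\sum_{i\leq k+1}\betai,\,1-\sum_{i\leq k+1}\wi\bigr)$. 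What each approach buys: the paper's argument is shorter and leverages machinery already in place, but it only yields the inequality; your computation is self-contained (it never needs to know that $\bu'$ is an optimizer) and gives more, namely the exact identity
\begin{equation*}
  \RE{\bu''}{\bw}-\RE{\bu'}{\bw}
  = \betai[k+1]\ln\frac{\betai[k+1]}{\wi[k+1]}
  + (A-\betai[k+1])\ln\frac{A-\betai[k+1]}{B-\wi[k+1]}
  - A\ln\frac{A}{B}
  \quad\text{with } A=1-\sum_{i\leq k}\betai,\ B=1-\sum_{i\leq k}\wi\,,
\end{equation*}
which is $A$ times a binary relative entropy and thus quantifies how much is lost by constraining one additional coordinate. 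Your admissibility checks (positivity of $B$ and $B-\wi[k+1]$ from $\bw\in\relint{\simplex}$, non-negativity of $A-\betai[k+1]$ from $\onenorm{\bbeta}\leq 1$) are the right ones, and the degenerate case $k+1=n$ that you set aside is equally glossed over by the paper, so no gap there relative to the original.
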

\begin{proof}
    Consider the following convex optimization problem for some $\bw\in\relint{\simplex}$,
  \begin{equation}
    \label{eq:opt-prob2}
    \begin{aligned}
      &\underset{\bu}{\text{min}}&\quad &\,\,\,\,\,\RE{\bu}{\bw} \\
      &\,\,\text{s.t.} &   & \,\,\,\,\betai - \ui = 0, && i=1,\ldots,k\\
      &&&\bm{1}\cdot\bu - 1 = 0&&\,.
    \end{aligned}
  \end{equation}
  The point $\bu'$ is the unique minimizer of~\eqref{eq:opt-prob2},
  while $\bu''$ clearly also satisifies the constraints
  of~\eqref{eq:opt-prob2} and is therefore a feasible solution.
  This implies that $\RE{\bu'}{\bw} \leq
  \RE{\bu''}{\bw}$.
\end{proof}

\begin{proof}[Proof of the second statement of Theorem~\ref{thm:projection-form}]
  Recall the second statement of the theorem: that
  Algorithm~\ref{alg:linear-time-proj} computes $\projw$ in linear
  time. We prove this statement by first showing that the set $\Psi$
  corresponding to this projection is the set of components 
  of minimal cardinality to set to their constraint values such that
  when the other components are normalized, no component lies below
  its constraint value.
  We then prove that Algorithm~\ref{alg:linear-time-proj} computes the
  projection by finding this set in linear time.

  In the proof of the first statement of the theorem  we proved that
  $\bp$ has the form~\eqref{eq:proj-max-form}.  Thus $\bp$ is uniquely
  specified by the set $\Psi=\{i\in [n] : \pii = \betai\}\subseteq\{1,\ldots, n\}$. There are
  therefore $2^{n}$ possible solutions.
  Claim~\ref{claim:ordering} proves that the magnitude of the ratio of
  a component and its constraint is smaller for a component
  to be set to its constraint value than a component to be normalized.
  That is, if $i\in\Psi$ and $j\notin\Psi$, then
  $\frac{\wi}{\betai}\leq\frac{\wi[j]}{\betai[j]}$. This reduces the
  number of feasible solutions to $n$.
  
  Given these $n$ possible solutions, claim~\ref{claim:ksetonly} shows
  that if $\Psi'\subseteq\Psi''$ with corresponding candidate
  projection vectors $\bu'$ and $\bu''$ respectively, then $\RE{\bu'}{\bw} \leq
  \RE{\bu''}{\bw}$. Thus to compute the projection, one must find the
  set $\Psi$ of minimum cardinality whose corresponding candidate
  projection vector is in $\Cb$.

  Observe that this ``minimal'' set $\Psi$ is specified uniquely by a
  threshold, $\threshold$, such that $\Psi = \{i\in[n] :
  \ri<\threshold\}$, where $\ri=\frac{\wi}{\betai}$, for
  $i=1,\ldots,n$. Algorithm~\ref{alg:linear-time-proj} finds $\Psi$ by
  finding this threshold.   The algorithm initially computes the
  vector $\br=\bw\odot\frac{1}{\bbeta}$ and when $\threshold$ has been found, the
  algorithm sets all components of $\wi$ where $\ri<\threshold$ to their
  thresholds $\betai$, and normalizes the remaining components.

  We now discuss how the algorithm finds $\threshold$ in linear time.
  On each iteration a candidate threshold is examined. These candidate
  thresholds are determined from an index set $\mW$, which is
  initially set to $\{1,\ldots,n\}$.
  On each iteration the threshold $\threshold$ is chosen as the median
  of the ratios in the set $\{ \ri : i \in \mW\}$
  (line~\ref{al:median}).
  This can be done in $\cO(\vert\mW\vert)$
  time~\cite{blum1973time}. If $\vert\mW\vert$ is even, then the
  algorithm can choose between the $\frac{\vert\mW\vert}{2}$ and the
  $\frac{\vert\mW\vert+1}{2}$ largest element arbitrarily. 
  The set $\mW$ is then sorted into
  two sets, $\mL$ and $\mH$, where ${\mL=\{i \in \mW : \ri < \threshold\}}$
  and ${\mH=\{i \in \mW : \ri > \threshold\}}$.

  The normalizing constant $\lambda$ is then computed
  (line~\ref{al:lambda}). If $\lambda\threshold<1$, then by
  Claims~\ref{claim:ordering} and~\ref{claim:ksetonly} the true
  threshold must be larger than the current candidate threshold
  $\threshold$, and must therefore correspond to $\ri$ for an index $i$
  contained in $\mH$. Otherwise the true threshold must be either equal to the
  current candidate threshold, or must correspond to $\ri$ for an index $i$
  contained in $\mL$.

  Since $\threshold$ was taken to be the median, then the algorithm
  iterates this procedure, setting $\mW=\mL$ or $\mW=\mH$ as appropriate.
  Additionally, since $\threshold$ was taken to be the median, then
  ${\max{\{\vert\mL\vert;\vert\mH\vert\}}\leq\frac{1}{2}\vert\mW\vert}$.
  When $\mW=\emptyset$, then the algorithm has found $\threshold$, and
  the projection is computed.

  There are a maximum of $\lceil\log{n}+1\rceil$ iterations of lines~\ref{al:whilestart}-\ref{al:whileend}, with the
  $i^{th}$ iteration taking $\cO(\frac{n}{2^{i}})$ time. The algorithm
  therefore takes $\cO(n)$ time to find $\threshold$, and the time
  complexity of the algorithm is therefore $\cO(n)$.
\end{proof}

\newpage  
\section{Proof of Corollary~\ref{cor:no-switch}}\label{sec:cor-proof}
\begin{proof}
  Let $\Psi := \{i\in[n] : \pii=\betai\}$. Recall from
  Theorem~\ref{thm:projection-form} that the projected vector $\bp$ takes the 
  form~\eqref{eq:proj-max-form}. Expanding the relative entropy terms of~\eqref{eq:pods-no-switch}
  then gives the following, 
  \begin{align*}
    \RE{\bu}{\bw} - \RE{\bu}{\bp}
    &=\sumin\ui\ln{\left(\frac{\pii}{\wi}\right)}\\
    &\geq\sumin\ui\ln{\left(\frac{\left(1 - \sum_{j\in \Psi}\betai[j]\right)\wi}{\left(1 -
            \sum_{j\in\Psi}\wi[j]\right)\wi}\right)}\\
    &=\ln{\left(\frac{1 - \sum_{j\in \Psi}\betai[j]}{1 -
      \sum_{j\in\Psi}\wi[j]}\right)}\\
    &\geq\ln{(1-\alpha)}\,,\label{eq:saved-cost}
  \end{align*}
  where the first inequality follows from the definition of $\pii$
  in~\eqref{eq:proj-max-form} and the fact that $\max\{a,b\}\geq b$.
  The second inequality follows from the fact that
  $\sum_{j\in\Psi} \wi[j]\geq 0$ and
  $\sum_{j\in\Psi}\betai[j]\leq\alpha$.
\end{proof}

\section{Proof of Theorem~\ref{thm:pods-bound}}\label{sec:thm-proof}
\begin{proof}
  We first prove the bound for \podsth, and then prove that \shareth\ has
  the same bound.
  We use the relative entropy $\RE{\but}{\bwt}$ as a measure of
  progress of the algorithm, where $\but$ is a
  comparator vector which we take to be a basis vector $\bei$ for some
  $i\in [\nexperts]$ corresponding to the locally best expert $\iit$ in 
  hindsight on trial $t$. Recall that the comparator sequence
  $\iit[1],\ldots,\iit[T]$ is partitioned  with $\nswitches$ {\em
    switches} into $\nsegments$ {\em segments}, where a segment is
  defined as a sequence of trials where the comparator is unchanged,
  i.e. $\iit[a]=\ldots=\iit[b]$ for some $a<b$.

  Recall that \pred\ and $\ell$ are assumed to be
  $(c,\frac{1}{c})$-realizable. That is, for any $\bwt\in\simplex$,
  $\bxt\in\cD^{\nexperts}$, and $\yt\in\cY$, there exists $\eta>0$
  such that
  \begin{equation}
    \label{eq:c-eta-realizability}
    \ell(\textup{\pred}(\bw,\bx), y) \leq -c\ln{\sumin \vi e^{-\eta \ell(\xii,y)}}
  \end{equation} 
  holds with $c\eta=1$.
  
  We first establish that 
  \begin{equation}
    \label{eq:loss-inequality}
    \lt - \ltit \leq c\left(\RE{\but}{\bwt} - \RE{\but}{\bwtdot}\right)
  \end{equation}
  holds for all $t$. Expanding the relative entropy terms gives
  \begin{align*}
    \RE{\but}{\bwt} - \RE{\but}{\bwtdot}
    &= \sumin\uti\ln{\frac{\wtdoti}{\wti}}\\
    &= \sumin\uti\ln{\frac{\wti e^{-\eta\lti}}{\wti \sumin[j]\wtj
      e^{-\eta\lti[j]}}}\\
    &= -\eta\sumin\uti\lti - \ln{\sumin[j]\wtj e^{-\eta\lti[j]}}\\
    &\geq -\eta\ltit  + \frac{1}{c}\lt\,,
  \end{align*}
  where the inequality follows from~\eqref{eq:c-eta-realizability}. Multiplying both sides by $c$ gives~\eqref{eq:loss-inequality}.
  
  We now find lower bounds, $\delta$, for $\RE{\but}{\bwtdot} -
  \RE{\but[t+1]}{\bwt[t+1]}$ to give non-negative terms of the form $\RE{\but}{\bwtdot} -
  \RE{\but[t+1]}{\bwt[t+1]} - \delta \geq 0$, which we will multiply
  by $c$ and add
  to~\eqref{eq:loss-inequality} to give a telescoping sum of relative
  entropy terms. We consider three distinct cases for the
  different values of $\but$ over the $T$ trials. 

  For the first case, we consider when there is no switch immediately after trial $t$
  (i.e., $\but = \but[t+1]$). We
  use Corollary~\ref{cor:no-switch} with $\bu=\but$, $\bw=\bwtdot$, and
  $\bbeta=\bbetat$. It follows then by definition that $\bp=\bwt[t+1]$
  and we obtain
  \begin{equation}
    \label{eq:eq-no-switch-1}
    \RE{\but}{\bwtdot} - \RE{\but[t+1]}{\bwt[t+1]} \geq \ln{(1-\alpha)}\,,
  \end{equation}
  which gives a telescoping sum of relative entropy terms within in each segment, paying
  $c\ln(1/(1-\alpha))$ for every trial where $\but=\but[t+1]$.
  
  For the two remaining cases, we will consider the segment boundaries, that is, the case when there is a
  switch and $\but\neq\but[t+1]$. 
  W.l.o.g let $\but=\bei[j]$ and let $\but[t+1]=\bei[k]$ for any
  $j\neq k$ (that is we switch from expert ``$j$'' to expert ``$k$'' after trial
  $t$). We then have the following 
  \begin{equation}\label{eq:segment-boundary}
    \RE{\but}{\bwtdot} - \RE{\but[t+1]}{\bwt[t+1]}
    =\sum\limits_{i=1}^{\nexperts}\uti\ln{\frac{\uti}{\wtdoti}} - 
    \sum\limits_{i=1}^{\nexperts}\uti[t+1]\ln{\frac{\uti[t+1]}{\wti[t+1]}}
    =\ln{\frac{1}{\wtdotj}} + \ln{\wtk[t+1]}\,,
  \end{equation}
  thus we collect a $\ln{(1/\wtdotj)}$ term from the \textit{last}
  trial of the segment of expert $j$ and a $\ln{(\wtk[t+1])}$ term from
  the \textit{first} trial of the new segment of expert $k$.
  We now consider the remaining two cases: when trial $t+1$ is the first time
  expert $k$ predicts well, and when trial $t+1$ is a trial on
  which we ``re-visit'' expert $k$. 

  For the first of these two cases, we consider the first time expert
  $k$ starts to predict well. We then use~\eqref{eq:update-proj} and~\eqref{eq:update-beta} to give
  \begin{equation}
    \label{eq:first-k-beta}
    \ln{\wtk[t+1]}\geq \ln{\betatk} \geq
    \ln{((1-\theta)^{t-1}\betatk[1])} =
    \ln{\left((1-\theta)^{t-1}\frac{\alpha}{\nexperts}\right)}\,.
  \end{equation}
  Substituting~\eqref{eq:first-k-beta} into~\eqref{eq:segment-boundary},
  we therefore pay $-c\ln{((1-\theta)^{t-1}\frac{\alpha}{n})}$ to switch
  to a new expert for the first time on trial $t+1$.

  Finally for the second of these two cases, we consider when expert
  $k$ has predicted well before. Let trial
  $q<t$ denote the \textit{last} trial of expert $k$'s most recent
  ``segment''. We then have the following (again
  using~\eqref{eq:update-proj} and~\eqref{eq:update-beta}),
  \begin{equation}
    \ln{\wtk[t+1]}
    \geq \ln{\betatk}
    \geq \ln{((1-\theta)^{t-q-1}\betatk[q+1])}
    \geq \ln{((1-\theta)^{t-q-1}\alpha\theta\wtdotk[q])}\,.\label{eq:not-first-k-beta}
  \end{equation}
  By substituting~\eqref{eq:not-first-k-beta}
  into~\eqref{eq:segment-boundary} for each segment boundary, 
  and summing over these boundaries, we therefore pay
  $-c\ln{((1-\theta)^{t-q-1}\alpha\theta)}$ in order to telescope the 
  $\ln{(\wtdotk[q])}$ term with the $\ln{(1/\wtdotk[q])}$ term from
  the end of expert $k$'s most recent segment ending on trial $q$.
  
  Putting these together we thus pay
  $c\ln{(1/(1-\alpha))}$ for every trial on which we don't switch
  (from Corollary~\ref{cor:no-switch}), we pay $c\ln{(1/(1-\theta))}$ for 
  every expert in our pool that \textit{isn't} predicting well or
  involved in a switch on every trial (i.e., $\pool-1$ times, on non-switch
  trials, and $\pool-2$ times on switch trials,
  from~\eqref{eq:first-k-beta} and~\eqref{eq:not-first-k-beta}), and finally when we
  switch to an expert $k$ before trial $t+1$ we pay
  $c\ln{(\nexperts/\alpha)}$ if it is the first 
  time to track expert $k$ (there are $\pool-1$ such trials), and
  $c\ln{(1/\alpha\theta)}$ otherwise (there are $\nswitches-\pool+1$
  such trials).
  
  Summing over all trials, and using $\RE{\but[1]}{\bwt[1]}\leq\ln{\nexperts}$
  then gives
  \begin{align}
    \sumtT\lt - \sumtT\ltit
    &\leq \sumtT c\left( \RE{\but}{\bwt} -
      \RE{\but}{\bwtdot} + \RE{\but}{\bwtdot} -
      \RE{\but[t+1]}{\bwt[t+1]}\right)\nonumber\\
    &\leq c\RE{\but[1]}{\bwt[1]} + c(T-\nswitches
      -1)\ln{\left(\frac{1}{1-\alpha}\right)} + c(\pool-1) \ln{\left(\frac{\nexperts}{\alpha}\right)}\nonumber\\
    &\qquad + c((\pool -1)(T-1)-\nswitches)
      \ln{\left(\frac{1}{1-\theta}\right)} +
      c(k-\pool+1)\ln{\left(\frac{1}{\alpha\theta}\right)}\nonumber\\
    &\leq c\pool\ln{\nexperts} + c(T-\nswitches
      -1)\ln{\left(\frac{1}{1-\alpha}\right)} +
      ck\ln{\left(\frac{1}{\alpha}\right)}\nonumber\\
    &\qquad + c((\pool
      -1)(T-1)-k)\ln{\left(\frac{1}{1-\theta}\right)}
      +
      c(\nswitches -\pool+1)\ln{\left(\frac{1}{\theta}\right)}\label{eq:untuned-bound}\,.
  \end{align}
  The optimal tuning of $\alpha$ and $\theta$ that
  minimizes~\eqref{eq:untuned-bound} is given by $\alpha =
  \frac{\nswitches}{T-1}$ and $\theta=\frac{\nswitches-\pool+1}{(\pool-1)(T-2)}$.
  Substituting these values into~\eqref{eq:untuned-bound} gives a
  bound of 
  \begin{equation*}
    cm\ln{\nexperts} + c(T-1)\cH\left(\frac{\nswitches}{T-1}\right) +
    c(\pool -1)(T-2)\cH\left(\frac{\nswitches-\pool+1}{(\pool -1)(T-2)}\right)\,,
  \end{equation*}
  which completes the proof for \podsth.

  We now prove that \shareth\ has the same bound with an almost
  identical argument as the proof just given for \podsth.
  Firstly observe that~\eqref{eq:segment-boundary} is independent of the
  algorithm update and therefore holds for both
  algorithms. Additionally, observe that the proof for \podsth\ relies on the 
  inequalities~\eqref{eq:loss-inequality},~\eqref{eq:eq-no-switch-1},~\eqref{eq:first-k-beta},
  and~\eqref{eq:not-first-k-beta}. We now prove that these
  inequalities hold for \shareth, and thus the two algorithms share
  the same bound.

  Firstly we observe that inequality~\eqref{eq:loss-inequality} holds
  since both algorithms use the same loss update, and we assume that the prediction function and loss
  function are $(c,\frac{1}{c})$-realizable.

  Secondly, it follows directly from the update~\eqref{eq:update-wt}
  that~\eqref{eq:eq-no-switch-1} holds for \shareth\ when
  $\but=\but[t+1]$, since $\bwt[t+1] \geq(1-\alpha)\bwtdot$ and
  therefore
  \begin{equation*}
    \RE{\but}{\bwtdot} - \RE{\but[t+1]}{\bwt[t+1]} =
    \sumin\uti\ln{\frac{\wti[t+1]}{\wtdoti}}\geq\sumin\uti\ln{\frac{(1-\alpha)\wtdoti}{\wtdoti}}=\ln{(1-\alpha)}\,.
  \end{equation*}

  The proof that~\eqref{eq:first-k-beta} holds follows directly from the
  updates~\eqref{eq:update-wt} and~\eqref{eq:update-vt} and the fact
  $\bvt[1]=\frac{\bm{1}}{\nexperts}$. That is, for the first time expert
  ``$k$'' appears on trial $t+1$,
  \begin{equation*}
    \ln{\wtk[t+1]}\geq \ln{(\alpha\vtk)} \geq
    \ln{((1-\theta)^{t-1}\alpha\vtk[1])} =
    \ln{\left((1-\theta)^{t-1}\frac{\alpha}{\nexperts}\right)}\,.
  \end{equation*}

  Similarly, the proof that~\eqref{eq:not-first-k-beta} holds follows
  directly from the updates~\eqref{eq:update-wt}
  and~\eqref{eq:update-vt}. That is, when we return to expert ``$k$'' on
  trial $t+1$,
  \begin{equation*}
    \ln{\wtk[t+1]}
    \geq \ln{(\alpha\vtk)}
    \geq \ln{((1-\theta)^{t-q-1}\alpha\vtk[q+1])}
    \geq \ln{((1-\theta)^{t-q-1}\alpha\theta\wtdotk[q])}\,.
  \end{equation*}
  Having shown that the inequalities~\eqref{eq:loss-inequality},~\eqref{eq:eq-no-switch-1},~\eqref{eq:first-k-beta},
  and~\eqref{eq:not-first-k-beta} hold for \shareth, the remainder
  of the proof follows exactly as the proof for \podsth.
\end{proof}

\section{Proof of Theorem~\ref{thm:proj-lt-sharing-general}}\label{sec:thm-proj-lt-sharing-proof}
Before proving Theorem~\ref{thm:proj-lt-sharing-general}, we introduce
some additional notation. Let $\bm{p}:=\projw[\Cb]$, and recall the 
definition of $\bw' = (1-\alpha)\bw+\alpha\bv$. We then define the following sets,
\begin{equation*}
  \begin{aligned}
    &\Pinc := \{i\in [n] : \pii > \wi\}\,,
    &&\Pdec := \{i\in [n] : \pii \leq \wi\}\,,\\
    &\Sinc := \{i\in [n] : \wi' > \wi\}\,,
    &&\Sdec := \{i\in [n] : \wi' \leq \wi\}\,.
  \end{aligned}
\end{equation*}
The subscripts $inc$ and $dec$ correspond to the relative change in
the weights before and after the corresponding update - whether they
\textit{increase} or \textit{decrease}, respectively.

We first require the following corollary, which follows naturally from Theorem~\ref{thm:projection-form}.
%%%%%%%%%%%%%%% 
%% Corollary %%
%%%%%%%%%%%%%%% 
\begin{corollary}
  \label{cor:pii-equals-betai}
  % Let $\bw$ and $\bbeta$ be defined as in
  % Theorem~\ref{thm:proj-lt-sharing-general}, and let
  % $\bm{p}:=\projw[\Cb]$. Let ${\Pinc := \{i\in [n] : \pii
  % > \wi\}}$.
If $i\in\Pinc$ then $\pii=\betai$.
\end{corollary}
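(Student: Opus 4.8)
The plan is to read off the explicit form of the projection from Theorem~\ref{thm:projection-form} and to reduce everything to a single structural fact: that the renormalization factor $\lambda$ never exceeds $1$. Recall from~\eqref{eq:proj-max-form} that each coordinate satisfies $\pii = \max\{\betai; \lambda\wi\}$, where $\lambda = \frac{1-\sum_{j\in\Psi}\betai[j]}{1-\sum_{j\in\Psi}\wi[j]}$ and $\Psi = \{i\in[n] : \pii = \betai\}$. Intuitively, projecting onto $\Cb$ can only \emph{raise} the constrained coordinates up to their floors $\betai$, thereby stealing mass from the free coordinates and pushing them down; so one expects $\lambda\leq 1$, and the free coordinates can never strictly increase.

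First I would establish $\lambda\leq 1$. For every $j\in\Psi$ the identity $\pii[j]=\betai[j]=\max\{\betai[j]; \lambda\wi[j]\}$ forces $\lambda\wi[j]\leq\betai[j]$. Summing over $j\in\Psi$ and writing $B:=\sum_{j\in\Psi}\betai[j]$ and $W:=\sum_{j\in\Psi}\wi[j]$ gives $\lambda W\leq B$. Substituting the definition $\lambda=(1-B)/(1-W)$ and clearing the positive denominator $1-W$ reduces this inequality to $W\leq B$, from which $\lambda=(1-B)/(1-W)\leq 1$ follows at once.

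With $\lambda\leq 1$ in hand the conclusion is immediate: for any $i\in\Pinc$ we have $\pii>\wi\geq\lambda\wi$ (using $\wi>0$ since $\bw\in\relint{\simplex}$), so the maximum defining $\pii$ cannot be attained by the term $\lambda\wi$, whence $\pii=\betai$, i.e.\ $i\in\Psi$, as claimed.

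The only point requiring care is the positivity of the denominator $1-W$, equivalently that $\Psi\neq[n]$. Since each $\wi>0$, the equality $W=1$ would force $\Psi=[n]$ and hence $\sum_i\pii=\sum_i\betai=\onenorm{\bbeta}$; this is consistent with $\bp\in\simplex$ only when $\onenorm{\bbeta}=1$, in which case $\Cb$ collapses to the single point $\bbeta$, so $\bp=\bbeta$ and $\pii=\betai$ holds for every $i$ trivially. In all other cases $W<1$ and the argument above applies verbatim. I expect this degenerate boundary case to be the only mild obstacle; the substance of the proof is the one-line observation that the renormalization factor satisfies $\lambda\leq 1$.
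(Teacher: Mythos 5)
Your proof is correct and takes essentially the same approach as the paper's: both arguments first establish that the normalization factor satisfies $\lambda\leq 1$ by exploiting the identity $\pii[j]=\betai[j]=\max\{\betai[j];\lambda\wi[j]\}$ on $\Psi$ (the paper via a contradiction at a single index, you by summing over $\Psi$), and then conclude from the max form that any $i\in\Pinc$ must have $\pii=\betai$ since $\lambda\wi\leq\wi<\pii$. Your explicit treatment of the degenerate case $\Psi=[n]$ is a minor additional care that the paper's proof leaves implicit.
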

\begin{proof}
  Recall that Theorem~\ref{thm:projection-form} states that $\bp$ is such that for  $i=1,\ldots,\nexperts$, 
  \begin{equation*}
    \pii=\max{\left\{\betai;\lambda\wi\right\}}\,,
  \end{equation*}
  where $\lambda=\frac{1 - \sum_{j\in \Psi}\betai[j]}{1 -
      \sum_{j\in\Psi}\wi[j]}$ is a normalizing constant.
  We first establish that $\lambda\leq 1$. Suppose $\lambda>1$, then
  this implies $\sum_{i\in\Psi}\wi >\sum_{i\in\Psi}\betai$. In this
  case there must exist $i\in\Psi$ such that
  $\wi>\betai$. However if $\lambda>1$ then $\lambda\wi>\wi>\betai$,
  but since $i\in\Psi$ then $\pii=\betai$, which must be greater than
  $\lambda\wi$ by Theorem~\ref{thm:projection-form}. This leads to a
  contradiction and thus our supposition that $\lambda>1$ is false.  
  
  The form of $\bp$ implies that $i\in\Pinc$ iff  $\wi
  <\betai$, since if $\wi\geq\betai$ then this implies that either
  $\pii=\betai\leq\wi$ or $\pii=\lambda\wi\leq\wi$, and in both of
  these cases $i$ must be in $\Pdec$. It then follows that if $i\in\Pinc$
  then $\pii=\betai$ since otherwise $\pii=\lambda\wi\leq\wi<\betai$ which is a contradiction.  
\end{proof}

We now require the following two lemmas, the first states that if a
weight $\wi$ were to increase after the projection update, then it
would always increase after the weight-sharing update.
%%%%%%%%%%% 
%% Lemma %%
%%%%%%%%%%% 
\begin{lemma}\label{lem:PincSinc}
  $\Pinc\subseteq\Sinc$.
\end{lemma}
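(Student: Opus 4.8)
The plan is to reduce the set inclusion to a pointwise implication: I would show that for every index $i$, $i\in\Pinc$ forces $i\in\Sinc$. The whole argument rests on the observation that the projection can only raise a coordinate as far as its lower box constraint, and that here the box constraint is exactly $\alpha\vi$, which is precisely the amount of mass that the share update mixes into coordinate $i$.

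First I would fix an arbitrary $i\in\Pinc$ and apply Corollary~\ref{cor:pii-equals-betai}, which guarantees that any coordinate strictly increased by the projection must have been pushed exactly onto its lower bound, i.e.\ $\pii=\betai$. Since $i\in\Pinc$ means $\pii>\wi$ by definition, combining these gives $\wi<\betai$. Next I would unpack the hypothesis $\betai=\alpha\vi$: because $0<\alpha<1$ and $\vi>0$ (as $\bv\in\relint{\simplex}$), we have $\alpha\vi<\vi$, so the chain $\wi<\alpha\vi<\vi$ yields the key inequality $\wi<\vi$.

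Finally I would compute the share update directly and read off the conclusion: $\wi'-\wi=(1-\alpha)\wi+\alpha\vi-\wi=\alpha(\vi-\wi)>0$, so $\wi'>\wi$, meaning $i\in\Sinc$. Since $i$ was arbitrary, $\Pinc\subseteq\Sinc$.

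I do not expect a genuine obstacle here. The only substantive step is recognizing that Corollary~\ref{cor:pii-equals-betai} lets us replace the (implicitly defined) projected value $\pii$ by the explicit constraint $\betai=\alpha\vi$; once both updates are compared against the same threshold, the strict inequality $\wi<\vi$ falls out of $\alpha<1$ together with the strict positivity of $\vi$, and the rest is a one-line computation.
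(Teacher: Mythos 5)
Your proof is correct and follows essentially the same route as the paper's: both arguments use Corollary~\ref{cor:pii-equals-betai} to replace $\pii$ by $\betai=\alpha\vi$, derive the chain $\wi<\alpha\vi<\vi$ from $\alpha<1$, and conclude via the computation $\wi'-\wi=\alpha(\vi-\wi)>0$. No gaps; the explicit justification that $\vi>0$ (from $\bv\in\relint{\simplex}$) is a nice touch the paper leaves implicit.
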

\begin{proof}
  For any $i\in [n]$ we have
  \begin{equation*}
    \wi' - \wi = \left(1 - \alpha \right)\wi +
    \alpha \vi - \wi
    = \alpha\left(v_i - \wi\right)\,,
  \end{equation*}
  and it follows that $i\in\Sinc$ iff $\wi<\vi$.
  Using Corollary~\ref{cor:pii-equals-betai} we conclude that if
  $i\in\Pinc$, then $\wi<\pii=\betai=\alpha\vi<\vi$ 
  and then $i$ must also be in $\Sinc$.
\end{proof}

%%%%%%%%%%% 
%% Lemma %%
%%%%%%%%%%% 
\begin{lemma}\label{lem:2Pinc}
  $\onenormBig{\bp-\bw} = 2\sum_{i\in\Pinc}(\pii-\wi)$, and $\onenormBig{\bw'-\bw} = 2\sum_{i\in\Sinc}(\wi'-\wi)$. 
\end{lemma}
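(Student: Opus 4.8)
The plan is to exploit the single structural fact that $\bp$, $\bw'$, and $\bw$ all lie in the simplex, so that the coordinates of each difference vector sum to zero and the $1$-norm collapses to twice its positive part. First I would record that $\bp=\projw[\Cb]$ lies in $\Cb\subseteq\simplex$ by the definition of relative entropy projection, and that $\bw'=(1-\alpha)\bw+\alpha\bv$ is a convex combination of the simplex points $\bw,\bv\in\relint{\simplex}$ with $0<\alpha<1$, hence also lies in $\simplex$. Consequently $\sumin\pii=\sumin\wi=\sumin\wi'=1$, so $\sumin(\pii-\wi)=0$ and $\sumin(\wi'-\wi)=0$.

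From here the argument is the standard ``positive part equals negative part'' decomposition of a signed vector of total mass zero. For the first identity I would split $\sumin(\pii-\wi)=0$ over $\Pinc$ and its complement $\Pdec$ to obtain $\sum_{i\in\Pinc}(\pii-\wi)=\sum_{i\in\Pdec}(\wi-\pii)$. Since by definition $\pii>\wi$ exactly on $\Pinc$ and $\pii\leq\wi$ on $\Pdec$, the absolute values carry the correct signs on each block and the $1$-norm splits as
\[
  \onenorm{\bp-\bw}=\sum_{i\in\Pinc}(\pii-\wi)+\sum_{i\in\Pdec}(\wi-\pii)=2\sum_{i\in\Pinc}(\pii-\wi)\,.
\]

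The second identity then follows by the identical calculation with $\bw'$ in place of $\bp$ and $\Sinc,\Sdec$ in place of $\Pinc,\Pdec$, using $\sumin(\wi'-\wi)=0$ and the fact that $\wi'>\wi$ precisely on $\Sinc$.

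There is no genuine obstacle in this lemma; the one point deserving a moment's care is verifying that $\bw'\in\simplex$ (so that its coordinates sum to one and the mass-zero decomposition is available), which is immediate from the convexity of $\simplex$. Everything else is the elementary observation that for two probability vectors the $1$-norm of their difference equals twice the total increase.
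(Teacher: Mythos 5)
Your proof is correct and follows essentially the same argument as the paper's: both split the $1$-norm over $\Pinc$ and $\Pdec$ (resp.\ $\Sinc$ and $\Sdec$) and use the fact that $\bp$, $\bw'$, and $\bw$ all have unit total mass, so the positive and negative parts of the difference must be equal. Your additional remark justifying $\bw'\in\simplex$ via convexity is a fine (if minor) elaboration of what the paper states as ``the total weight does not change after an update.''
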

\begin{proof}
  We prove the first equality by observing that 
  \begin{equation*}
    \onenormBig{\bp-\bw} = \sum_{i=1}^{n}\vert \pii-\wi\vert =
    \sum_{i\in\Pinc}(\pii-\wi) + \sum_{i\in\Pdec}(\wi-\pii)\,,
  \end{equation*}
  and since the total weight does not change after an update (i.e.,
  $\sumin \pii = \sumin \wi$), necessarily 
  we have $\sum_{i\in\Pinc}(\pii-\wi) = \sum_{i\in\Pdec}(\wi-\pii)$.
  Since $\sumin\wi'=\sumin\wi$, the same argument can be used to prove
  the second claim.
\end{proof}

%%%%%%%%%%%%%%%%%%% 
%% Theorem Proof %%
%%%%%%%%%%%%%%%%%%% 
\begin{proof}[Proof of Theorem~\ref{thm:proj-lt-sharing-general}]
  Using Corollary~\ref{cor:pii-equals-betai}, and the definition of
  $\bw'$, we have for $i\in\Pinc$,
  \begin{equation}\label{eq:final-pinc-inequality}
    \wi'-\wi = (1-\alpha)\wi + \alpha\vi-\wi
    =\alpha(\vi-\wi)
    =\betai-\alpha\wi
    =\pii-\alpha\wi
    > \pii-\wi\,,
  \end{equation}
  where the inequality arises from the fact that $\alpha<1$.
  Finally combining this inequality with Lemmas~\ref{lem:PincSinc} and~\ref{lem:2Pinc} gives 
  \begin{align*}
    \onenormBig{\bp-\bw} &= 2\sum_{i\in\Pinc}(\pii-\wi)
    &&(\text{Lemma~\ref{lem:2Pinc}})\\ 
                         &< 2\sum_{i\in\Pinc}(\wi'-\wi)&&(\text{Equation~\ref{eq:final-pinc-inequality}})\\
                         &\leq 2\sum_{i\in\Sinc}(\wi'-\wi)
    &&\text{(Lemma~\ref{lem:PincSinc})}\\ 
                         &=
                           \onenormBig{\bw'-\bw}\,. &&(\text{Lemma~\ref{lem:2Pinc}})
  \end{align*}
\end{proof}

\section{Proof of Proposition~\ref{prop:pbts-as-mpp}}\label{sec:proof-pbts-as-mpp-thm}
\begin{proof}
  It suffices to show that
  \begin{equation}
    \label{eq:normalized-update}
    \frac{\ati}{\sum_{j=1}^{\nexperts}\atj}=\wti\,,
  \end{equation}
  and
  \begin{equation}
    \label{eq:normalized-update-sleep}
    \frac{\sti}{\sum_{j=1}^{\nexperts}\stj}=\vti
  \end{equation}
  for all $t$.
  Since the initial distribution, $\bpi$, of the
  Markov chain prior is taken to be the stationary distribution, the
  detailed balance equation, $\Pws\piw = \Psw\pis$, holds for
  all trials.

  It is therefore straightforward to show that
  $\sum_{i=1}^{\nexperts}\ati=\piw$ and
  $\sum_{i=1}^{\nexperts}\sti=\pis$ for all $t$.
  Letting $\alpha=\Pws$, and $\theta=\Psw$, we proceed to
  prove that~\eqref{eq:normalized-update}
  and~\eqref{eq:normalized-update-sleep} hold simultaneously for all $t$ by induction.
  The case for $t=1$ is trivial. Then by induction on $t$ for $t\geq 1$, 
  \begin{align*}
    \frac{\ati[t+1]}{\piw}
    &= \Pww\frac{\ati e^{-\eta\lti}}{\sum_{j=1}^{\nexperts}\atj
      e^{-\eta\lti[j]}}
      + \frac{\Psw}{\piw}\sti\\
    &= \Pww\frac{\ati e^{-\eta\lti}}{\sum_{j=1}^{\nexperts}\atj
      e^{-\eta\lti[j]}}
      + \frac{\Pws}{\pis}\sti\\
    &= \Pww\wtdoti + \Pws\vti&&\text{(induction)}\\
    &= (1-\alpha)\wtdoti + \alpha\vti\\
    &= \wti[t+1]\,,
  \end{align*}
  and similarly
  \begin{align*}
    \frac{\sti[t+1]}{\pis}
    &= \frac{\Pws\piw}{\pis}\frac{\ati e^{-\eta\lti}}{\sum_{j=1}^{\nexperts}\atj
      e^{-\eta\lti[j]}}
    + \Pss\frac{\sti}{\pis}\\
    &= \Psw\frac{\ati e^{-\eta\lti}}{\sum_{j=1}^{\nexperts}\atj
      e^{-\eta\lti[j]}}
    +\Pss\frac{\sti}{\pis}\\
    &= \Psw\wtdoti + \Pss\vti &&\text{(induction)}\\
    &= \theta\wtdoti + (1-\theta)\vti\\
    &= \vti[t+1]\,.
  \end{align*}
  We therefore conclude by the inductive argument that~\eqref{eq:normalized-update}
  and~\eqref{eq:normalized-update-sleep} hold for all $t\geq 1$.
\end{proof}

\end{document}